\newtheorem{lemma}{Lemma}
\newtheorem{definition}{Definition}
\newtheorem{proposition}{Proposition}
\definecolor{dkblue}{rgb}{0,0.39,0}
\definecolor{gray}{rgb}{0.66,0.66,0.66}
\definecolor{mauve}{rgb}{0.91,0.33,0.50}
\definecolor{gold}{rgb}{1,0.84,0}
\tiny\color{gray},
\newtheorem{remark}{Remark}%
\begin{document}

\title[Cybercrime Prediction via Geographically Weighted Learning]{Cybercrime Prediction via Geographically Weighted Learning}

\author*[1]{\fnm{Muhammad Al-Zafar} \sur{Khan}}\email{Muhammad.Al-ZafarKhan@zu.ac.ae}

\author[1,2]{\fnm{Jamal} \sur{Al-Karaki}}\email{Jamal.Al-Karaki@zu.ac.ae}

\author[1]{\fnm{Emad} \sur{Mahafzah}}\email{Emad.Mahafzah@zu.ac.ae}

\affil*[1]{\orgname{College of Interdisciplinary Studies (CIS), Zayed University}, \state{Abu Dhabi}, \country{UAE}}

\affil[2]{\orgdiv{College of Engineering}, \orgname{The Hashemite University}, \city{Zarqa}, \country{Jordan}}

\abstract{Inspired by the success of Geographically Weighted Regression and its accounting for spatial variations, we propose GeogGNN -- A graph neural network model that accounts for geographical latitude and longitudinal points. Using a synthetically generated dataset, we apply the algorithm for a 4-class classification problem in cybersecurity with seemingly realistic geographic coordinates centered in the Gulf Cooperation Council region. We demonstrate that it has higher accuracy than standard neural networks and convolutional neural networks that treat the coordinates as features. Encouraged by the speed-up in model accuracy by the GeogGNN model, we provide a general mathematical result that demonstrates that a geometrically weighted neural network will, in principle, always display higher accuracy in the classification of spatially dependent data by making use of spatial continuity and local averaging features.}

\keywords{Cybercrime Prediction, Geographically Weighted Learning, Geographically Weighted Graph Neural Networks (GeogGNNs)}

\footnotetext{Submitted to the International Jordanian Cybersecurity Conference 2024 (IJCC2024)}

\maketitle
\section{Introduction}

Geographically Weighted Regression (GWR) is an immensely powerful spatial analysis method in statistics for performing regression tasks in a non-Euclidean geometric setting that accounts for dimensional features of each datapoint where regression coefficients vary from one location to another. Introduced by Fotheringham et al. in \cite{brunsdon1996geographically}, with follow-up studies and monographs in \cite{brunsdon1998geographically,fotheringham2009geographically,comber2023route}, the method has been immensely successful in applications to Computational Geography and related areas. In addition, the method has seen widespread applications in Archaeology \cite{bevan2009modelling}, Cartography \cite{mennis2006mapping}, Transportation and Real Estate Studies \cite{cardozo2012application,dziauddin2015estimating,caset2020integrating}, Physiology and Health \cite{gilbert2011using,chakraborty2022exploring,ma2023influential,hassaan2021gis,mohammadi2023covid,corner2013modelling,melaku2022geographical}, Population Modeling and Migration Patterns \cite{lin2011using,wang2019examining}, Ecology \cite{windle2010exploring}, Urban and City Planning \cite{okwori2021spatial,taghipour2014application}, Economics and Labor Market Studies \cite{lewandowska2018geographically,craig2022social}, and many other application domains. 

The key subtly is that standard models do not account for the point-to-point spatial variation, namely given a set of features $\mathbf{X}=\left(x_{1},x_{2},\ldots x_{n}\right)$, the OLS model is
\begin{equation}
\hat{y}=\theta_{0}+\theta_{1}x_{1}+\theta_{2}x_{2}+\ldots+\theta_{n}x_{n}+\epsilon,
\end{equation}
with $\epsilon$ as the error term and where the optimal value of the coefficients that fit the model, $\boldsymbol{\theta}^{*}$, are given by
\begin{equation}
\boldsymbol{\theta}^{*}=\underset{\text{all}\;\boldsymbol{\theta}}{\arg\min}\;|\hat{y}-\theta_{0}-\boldsymbol{\theta}^{T}\mathbf{X}|^{2}.
\end{equation}
This standard paradigm fails to account for spatial variations at different points. Thus, the impetus for the development of GWR. 

Naturally, the next question in the progression of these models was to ask how these models could be extended for classification tasks results in the development of the Geographically Weighted Logistic Regression (GWLR) model for binary response variable problems \cite{nkeki2019geographically,zafri2023using}, Geographically Weighted Discriminant Analysis (GWDA) \cite{brunsdon2007geographically}, and now the current state-of-the-art fields of Spatial Machine Learning (SML) methods that encompass models such as Geographically Weighted Random Forest (GW-RF) \cite{luo2022socioeconomic,santos2019geographically,khan2022geographically,wu2024geographically,georganos2022forest}, Geographically Weighted SVMs (GW-SVMs) \cite{andris2013support,yang2023geographically}, Geographically Weighted Artificial Neural Networks (GW-ANNs) \cite{hagenauer2022geographically}, amongst others.  

As cybercrime becomes ever more sophisticated, the apprehension of cybercriminals becomes an ever-increasing enterprise among governments and law enforcement officials. Having the ability to associate a location to an IP address from where an attack took place is extremely powerful for the long-term goal of identifying hotspots, being able to categorize the nature of the cybercrime level, and identifying hotspots such as cybercrime offices and farms from where these attacks take place. As such, graph theory becomes an archetypical modeling approach because the points on a map can be represented as nodes, and weightings between their connections (edges) can be sought. However, standard graph ML  algorithms assume that the underlying fabric for which the modeling takes place is a flat space; thus, an extension to realistic coordinates, such as points of a map, is a natural next question. 

One such consideration is a re-definition of the adjacency matrix, borrowing from Gaussian and bisquare kernels of GWR, to give a more rich definition of the notion of distance between nodes. In this research, we consider a Gaussian-esque kernel where standard notion of the Euclidean norm still applies, however, it is attenuated by a scale parameter -- analogous to the bandwidth parameter in GWR -- which adjusts for localization.   

A natural question that arises is that based on the Universal Approximation Theorem, if NNs are a universal function approximation, why then does one not simply feed in the latitude and longitude coordinates as features and train the NN to learn based on this? As is evidenced in the experiments below, this decreases the accuracy of the model significantly, resulting in Proposition \ref{error reduction}. 

The paper is divided as follows:

In Sec. \ref{related work}, we discuss some important literature results on using GNNs to solve problems related to traditional and cybercrimes. 

In Sec. \ref{theory}, we provide the theoretical framework of the model we employed. 

In Sec. \ref{experiments}, we provide the results of the experiments we performed and benchmark our model against traditional NNs. 

In Sec. \ref{mathematical analysis}, we establish an important mathematical result. 

In Sec. \ref{conclusion}, we provide closing remarks on this research and ponder and reflect on the outcomes and implications. 

\section{Related Work}\label{related work}

In \cite{nisha2023semantic}, the authors propose a hybrid semantic GNN (SGNN) that is combined with a Convolutional Neural Network (CNN) to address spam email detection via a binary classification scheme. It is shown that the leveraging of semantic information improves model accuracy. Similarly, in \cite{navarro2023deep}, the authors used Graph Convolutional Networks with directed edges (DDCN) to detect fraudulent transactions in finance. 

In \cite{guo2021deep}, the authors use GNNs that incorporate heterogeneous information in the form of stable and dynamic relationships between users and content to improve the accuracy of spam detection. The proposed model showed a small, but nonetheless significant, improvement in the detection accuracy. 

In \cite{kurshan2020financial}, the authors focus on the digital payments space and discuss the challenges and opportunities offered by GNNs. Similarly, in \cite{bilot2023graph} and in \cite{nicholls2021financial}, the authors provide a review of the state-of-the-art GNN-based methods. 

In addition to cybercrime, there has been a plethora of applications to urban crimes. These include the works in \cite{tekin2023crime,yang2023crimegnn,zhou2024hdm,hou2022integrated,roshankar2023spatio}.

Clearly, what is lacking in the literature is the accounting for spatial variation at graph nodes. In this research, we aim to fill that gap by demonstrating the use of latitude and longitudinal points and a re-definition of the connectivity matrix regarding Euclidean distance and accounting for the influence of localization.

\section{Theory: Geographically Weighted Graph Neural Networks (GeogGNNs)}\label{theory}

Standard Euclidean ML, in the context of classification problems, tries to fit a function $f:\mathbf{X}\to y$ from a feature space $\mathbf{X}=\left(x_{1},x_{2},\ldots,x_{n}\right)$ to a target variable $y$. However, the function $f$ does not account for spatial variations in the data. In the context of a geographical setting, these are the longitudes and latitudes of location points on a map. Thus, in order to successfully account for spatial variations, we need to account for the non-Euclidean geometrical effects at each point, namely, to find a function $g:\mathbf{X}(\mathcal{X},\mathcal{Y})\to y$ such that the feature space takes into consideration spatial variations at different coordinate points $\left(\mathcal{X},\mathcal{Y}\right)$. To this end, mathematical objects such as graphs $G=\left(V,E\right)$ composed of vertices $V\left\{v_{1},v_{2},\ldots v_{N}\right\}$ and edges $E=\left\{e_{i,j}:e_{i,j}=e_{j,i},1\leq i,j,\leq N,i\neq j\right\}$ forms a natural paradigm for modeling.

Given the graph $G=\left(V,E\right)$, with $V$ representing features in the context of ML. We define the adjacency matrix $\mathbf{A}$ as
\begin{equation}
A_{i,j}=
\begin{cases}
1,\quad\text{if there is an edge between nodes}\;i\;\text{and}\;j, \\
0,\quad\text{if there is no edge}.
\end{cases}
\end{equation}
$\mathbf{A}$ represents the strength of the relationship between the nodes. To each node, there is an associated feature vector
\begin{equation}
\mathbf{X}_{i}=
\begin{pmatrix}
x_{i,1} \\ x_{i,2} \\ \vdots \\ x_{i,n}
\end{pmatrix},
\end{equation}
where there are $n$ features and $1\leq i\leq |V|$, with $|V|$ being the cardinality of the vertex set.  

Thereafter, the node features are updated in the graph convolution layer by aggregating the features of neighboring nodes. Mathematically, this is accomplished as follows: Given layer $l$, the feature matrix update rule in layer $l+1$ is given by
\begin{equation}
\mathbf{H}_{\left(l+1\right)}=\sigma\left(\tilde{\mathbf{D}}^{-1/2}\tilde{\mathbf{A}}\tilde{\mathbf{D}}^{-1/2}\mathbf{H}_{\left(l\right)}\mathbf{W}_{\left(l\right)}\right),
\end{equation}
with $\mathbf{H}_{\left(0\right)}\leftarrow\mathbf{X}$, $\tilde{\mathbf{A}}=\mathbf{A}+\mathbf{I}$ is the adjacency matrix with self-loops that allows each node to consider its own features, $\tilde{\mathbf{D}}$ is the diagonal degree matrix of $\tilde{\mathbf{A}}$, $\mathbf{W}_{\left(l\right)}$ is the learnable weight matrix at layer $l$ and $\sigma$ is the nonlinear activation function, typically chosen to be ReLU. 

When working with geographical data, the standard definition of the adjacency matrix is not sufficient, and we define it in terms of the geographical kernel with a Gaussian profile
\begin{equation}
\begin{aligned}
A_{i,j}=&\;\exp\left(-\frac{d_{i,j}^{2}}{2\varphi^{2}}\right), \\
d_{i,j}=&\;||\mathbf{X}_{i}-\mathbf{X}_{j}||_{2},
\end{aligned}
\end{equation}
where $d_{i,j}$ is the Euclidean distance between nodes $i$ and $j$ and $\varphi$ is the scale of influence factor with the property that the smaller the value of $\varphi$, the more localized is the influence. 

We give a complete description of the training in Algorithm \ref{algo1}.

\begin{algorithm}[H]
\caption{GeogGNN}
\label{algo1}
\small{
\begin{algorithmic}[1]
\State \textbf{input:} feature matrix $\mathbf{X}\in\mathbb{R}^{N\times F}$ composed of $N$ nodes and $F$ features per node, label vector $\mathbf{y}\in\mathbb{R}^{N}$, latitude and longitude ordered pair $\left\{\left(\mathcal{X}_{i},\mathcal{Y}_{i}\right):1\leq i\leq N\right\}$, threshold distance $\lambda$
\State \textbf{initialize:} $\mathbf{H}_{0}\leftarrow\mathbf{X}$, normalize the features to the scale $\left[0,1\right]$ using standardization $x_{\text{norm}}=\left(x-\mu_{x}\right)/\sigma_{x}$ with $\mu_{x}$ and $\sigma_{x}$ being the mean and standard deviation of the respective feature
\Repeat 
\For {each pair of nodes $\left(v_{i},v_{j}\right)\in V, 1\leq i,j\leq N, i\neq j$}
\State calculate the geographical distance between nodes
\begin{equation*}
d_{i,j}=\sqrt{\left(\mathcal{X}_{i}-\mathcal{X}_{j}\right)^{2}+\left(\mathcal{Y}_{i}-\mathcal{Y}_{j}\right)^{2}}
\end{equation*}
\If {$d_{i,j}<\lambda$}
\State add an edge $e_{i,j}\in E$ between $v_{i}$ and $v_{j}$
\State $\mathbf{A}\ni A_{i,j}=1$
\EndIf
\State $\tilde{\mathbf{A}}\leftarrow\mathbf{A}+\mathbf{I}$ \Comment{allowing for self-loops in the adjacency matrix}
\State calculate the diagonal degree matrix of $\tilde{\mathbf{A}}=[\tilde{A}]_{i,j}$
\begin{equation*}
\begin{aligned}
\tilde{\mathbf{D}}=&\;\text{diag}\left(D_{1},D_{2},\ldots,D_{N}\right), \\
D_{i}=&\;\sum_{j=1}^{N}\tilde{A}_{i,j}
\end{aligned}
\end{equation*}
\For {each node $v\in V$} 
\For {each layer $l=1:l_{N}$}
\State aggregate feature information from neighbors and apply 
\begin{equation*}
\mathbf{H}_{\left(l+1\right)}=\sigma\left(\tilde{\mathbf{D}}^{-1/2}\tilde{\mathbf{A}}\tilde{\mathbf{D}}^{-1/2}\mathbf{H}_{\left(l\right)}\mathbf{W}_{\left(l\right)}\right)
\end{equation*}
\State calculate probabilities of each class
\begin{equation*}
P=\text{softmax}\left(\tilde{\mathbf{D}}^{-1/2}\tilde{\mathbf{A}}\tilde{\mathbf{D}}^{-1/2}\mathbf{H}_{\left(l_{N}\right)}\mathbf{W}_{\left(l_{N}\right)}\right)
\end{equation*}
\State calculate cross-entropy loss
\begin{equation*}
J=-\frac{1}{N}\sum_{v\in V}\sum_{c\in C}y_{v,c}\log P_{v,c}
\end{equation*}
\EndFor
\EndFor
\EndFor
\Until {$J\to 0$}
\end{algorithmic}
}
\end{algorithm}

The following metrics were used to assess the model's performance: F1 score, precision, recall, log-loss, AUC-ROC, and AUC-PR. As these are standard metrics and can be found in any textbook on ML, a restatement is avoided here. However, in interpreting the results, these are alluded to and explained. 


\section{Experiments}\label{experiments}

Following Algorithm \ref{algo1}, the model was built, and the results are encapsulated in Figs. \ref{fig1} and \ref{fig2}. The model was benchmarked against a standard NN and a standard CNN model; the results of the former's training are shown in Figs. \ref{fig3}--\ref{fig5}, and the results of the latter's training are shown in Figs. \ref{fig8}--\ref{fig6}.  

\begin{figure}[H] 
    \centering
    \includegraphics[width=1.2\linewidth]{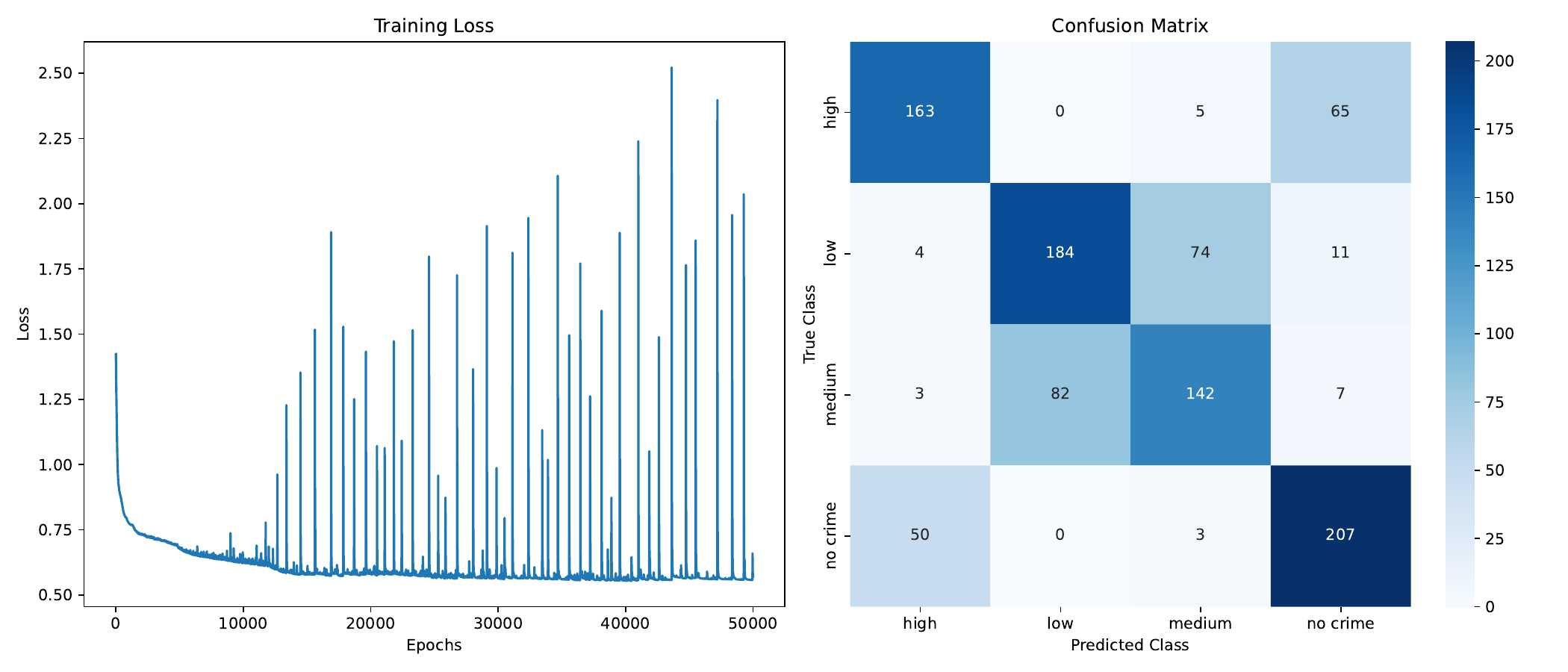}
    \caption{Loss function and confusion matrix for the model for the GeogGNN model.}
    \label{fig1}
\end{figure}

\begin{figure}[H] 
    \centering
    \includegraphics[width=1.2\linewidth]{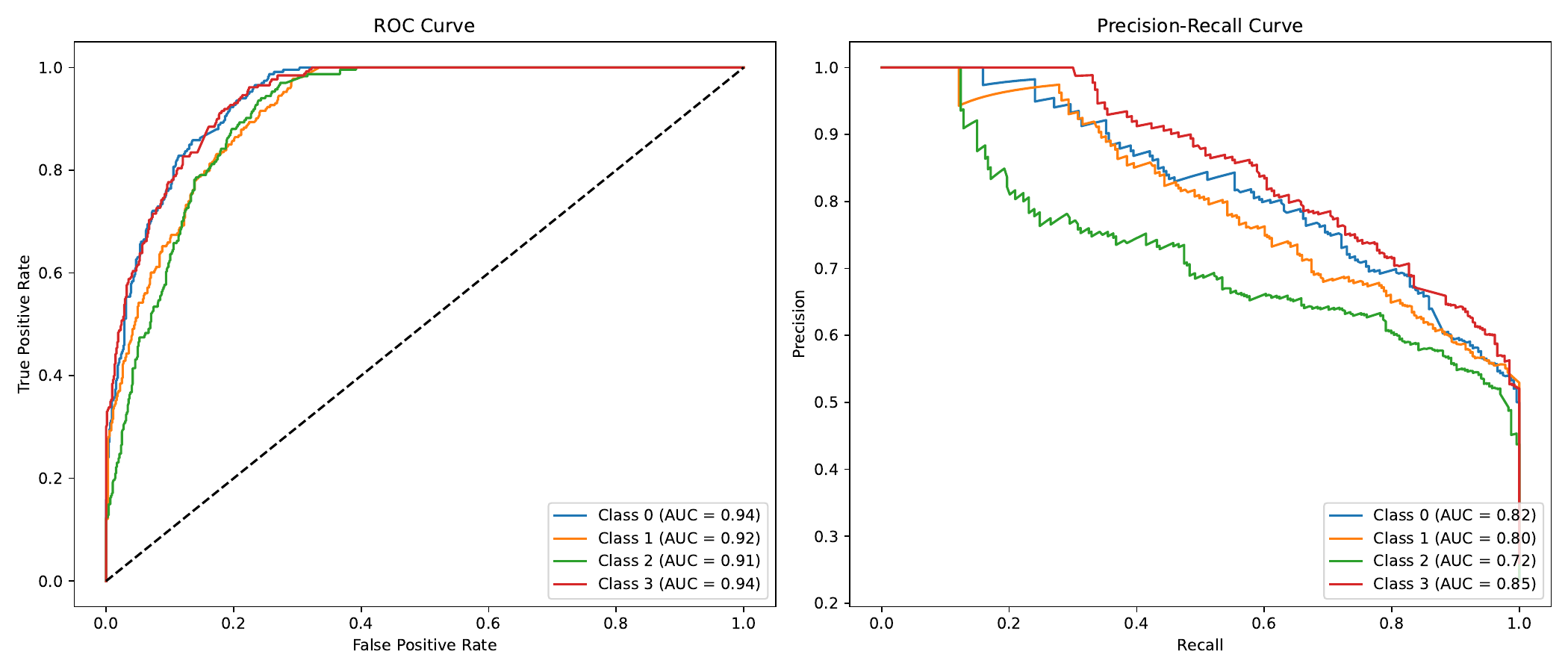}
    \caption{ROC and PR curves for each class in the model GeogGNN model.}
    \label{fig2}
\end{figure}

\begin{figure}[H] 
    \centering
    \includegraphics[width=1.0\linewidth]{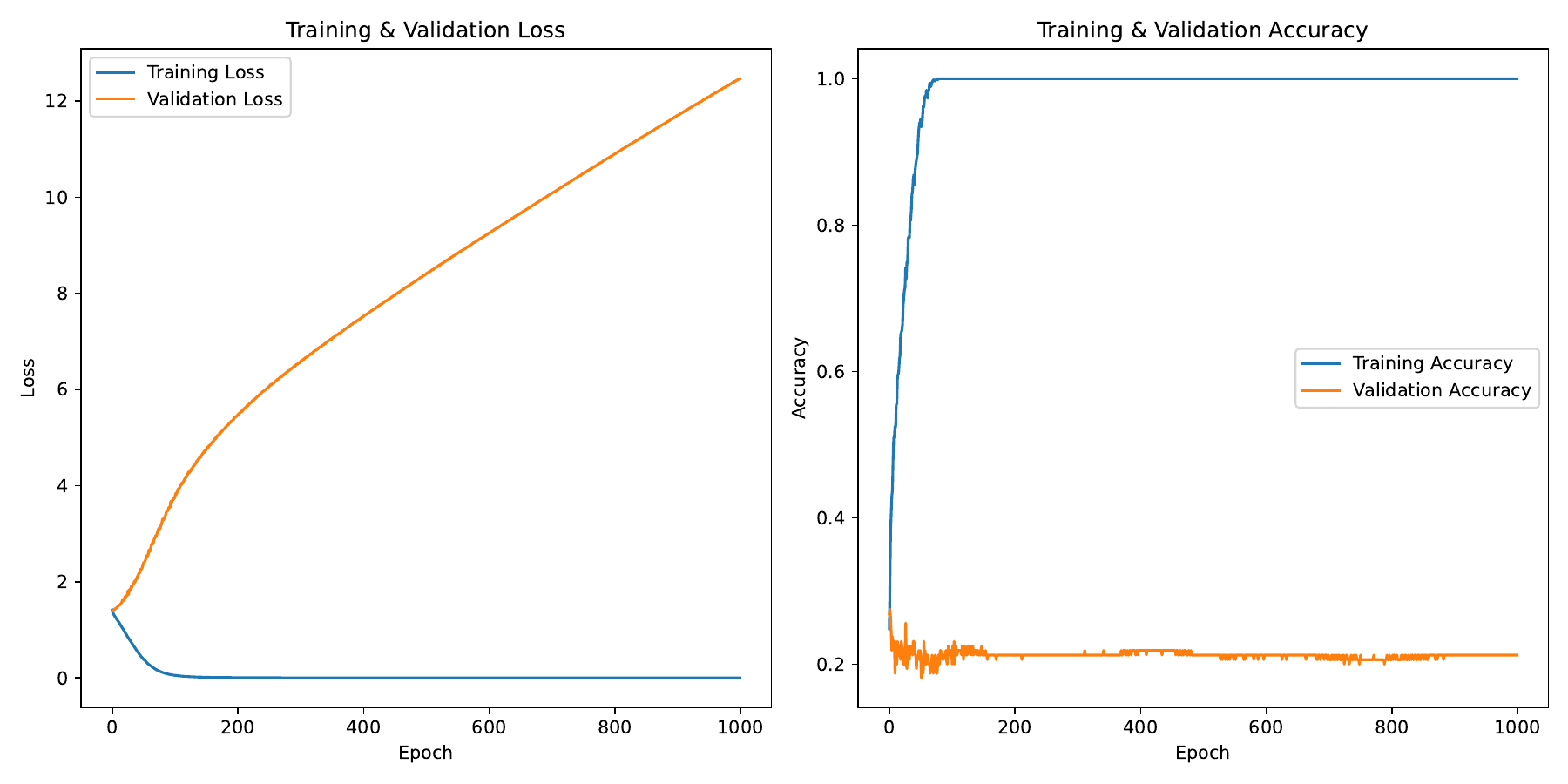}
    \caption{Loss function and training and validation accuracy for the standard neural network model.}
    \label{fig3}
\end{figure}

\begin{figure}[H] 
    \centering
    \includegraphics[width=0.8\linewidth]{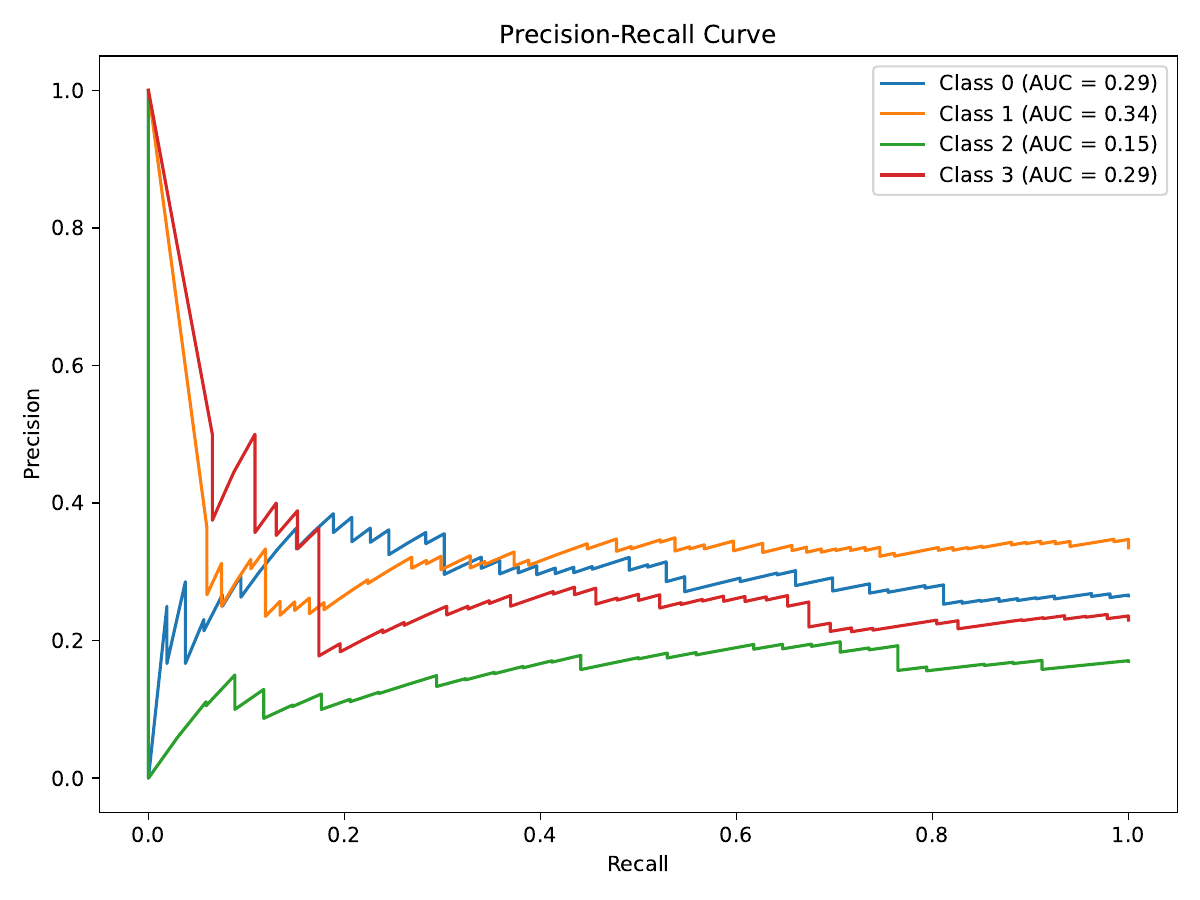}
    \caption{Precision-recall curves for the standard neural network model.}
    \label{fig4}
\end{figure}

\begin{figure}[H] 
    \centering
    \includegraphics[width=1.0\linewidth]{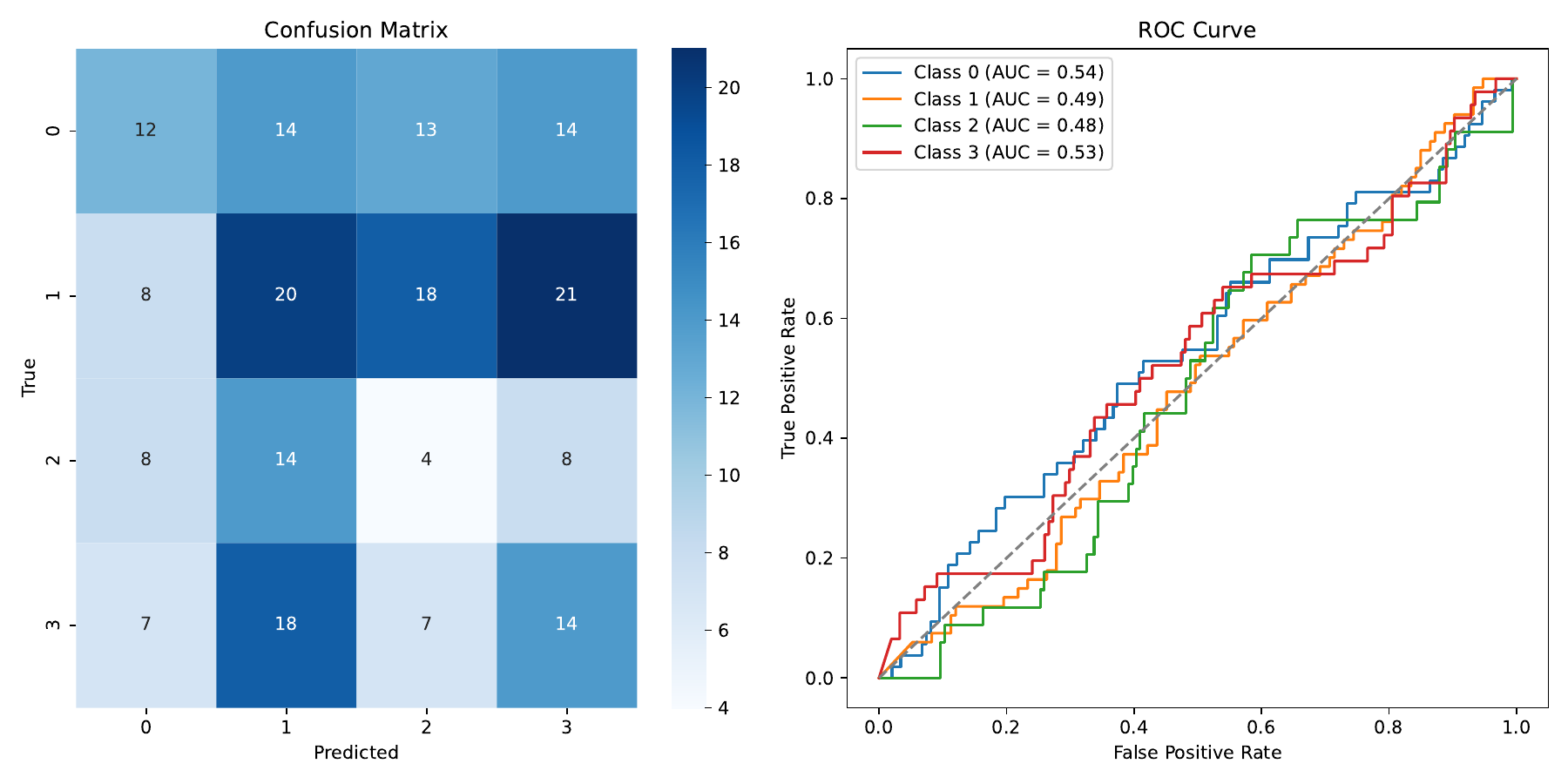}
    \caption{Confusion matrix and ROC curves for the standard neural network model.}
    \label{fig5}
\end{figure}

\begin{figure}[H]
    \centering
    \includegraphics[width=1.0\linewidth]{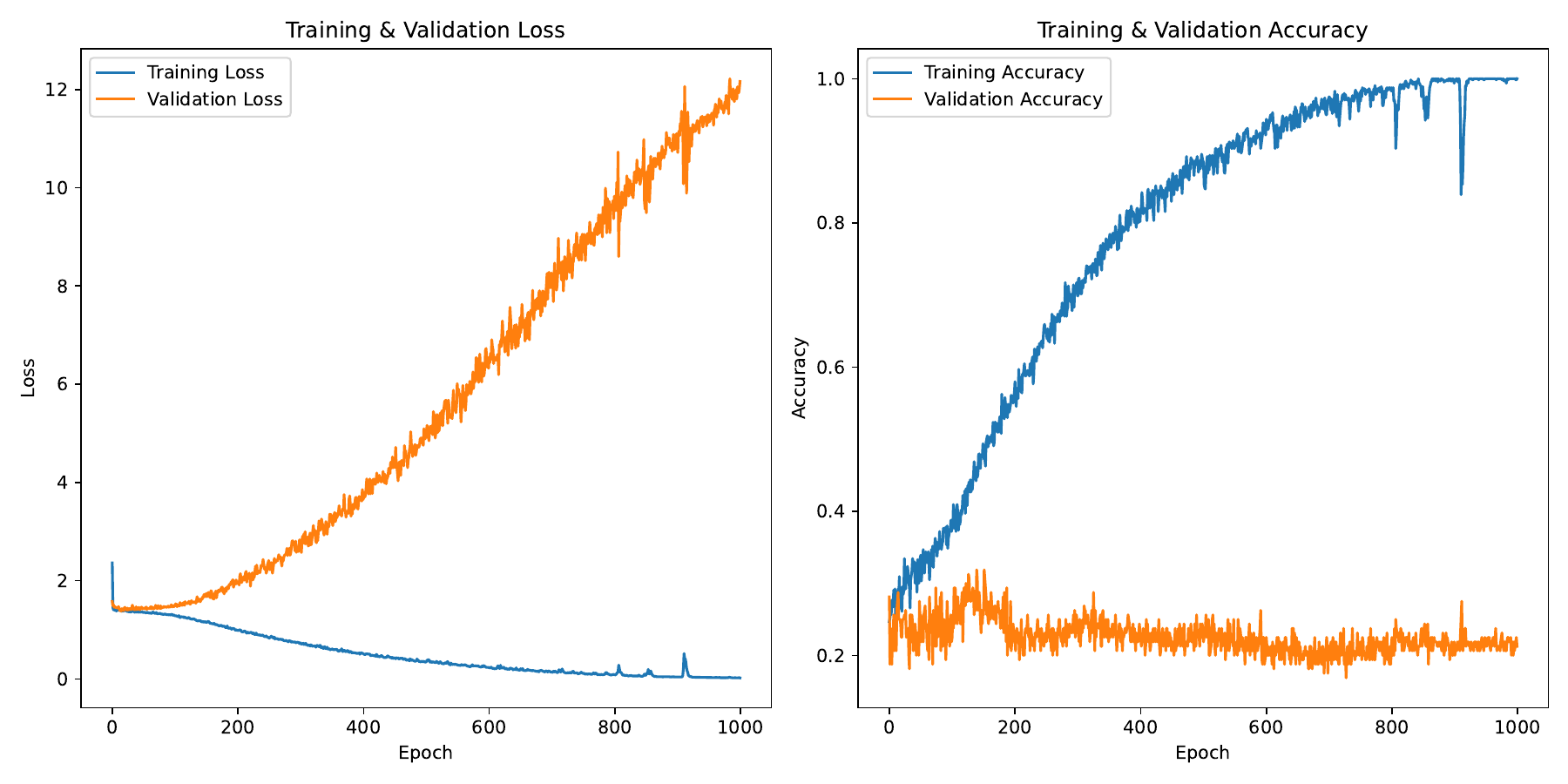}
    \caption{Loss function and training and validation accuracy for the convolutional neural network model.}
    \label{fig8}
\end{figure}

\begin{figure}[H]
    \centering
    \includegraphics[width=0.9\linewidth]{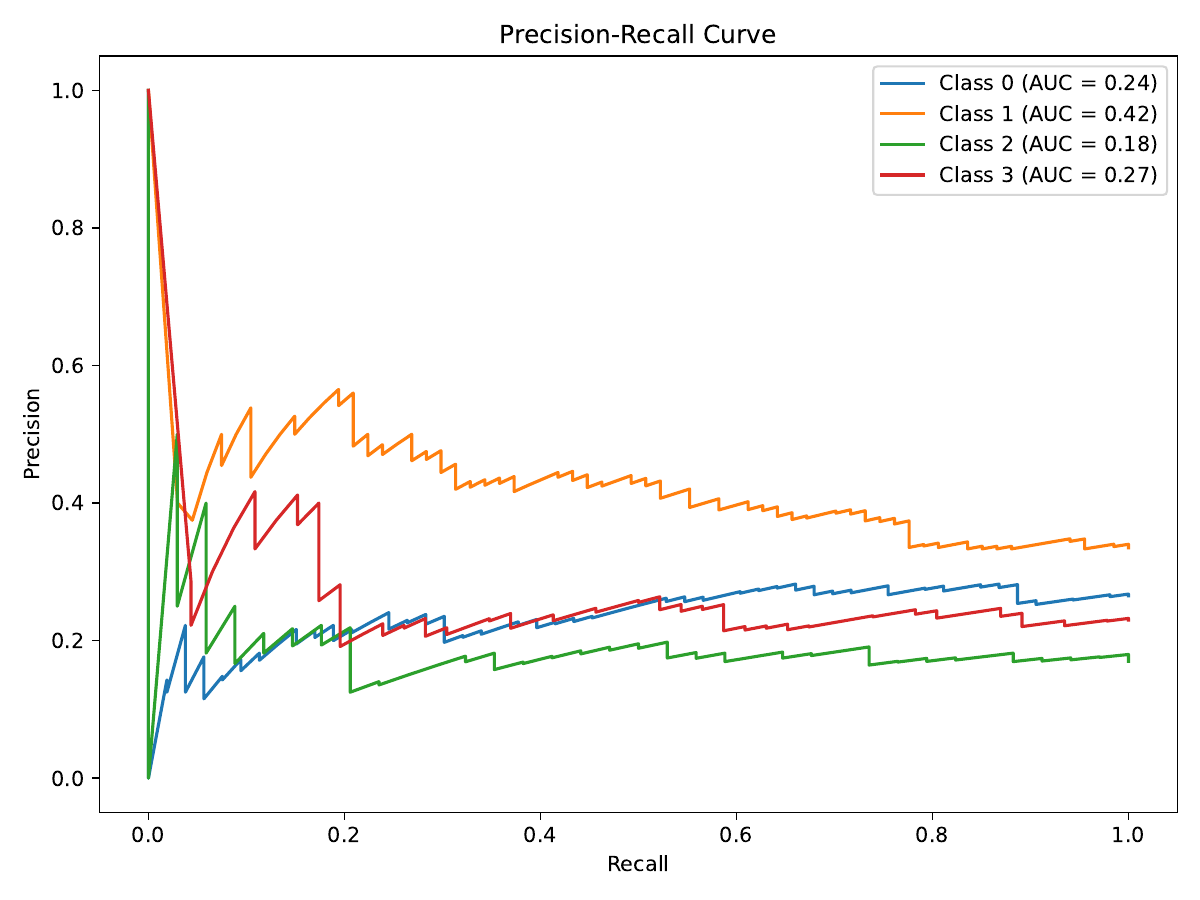}
    \caption{Precision-recall curves for the convolutional neural network model.}
    \label{fig7}
\end{figure}

\begin{figure}[H] 
    \centering
    \includegraphics[width=1.1\linewidth]{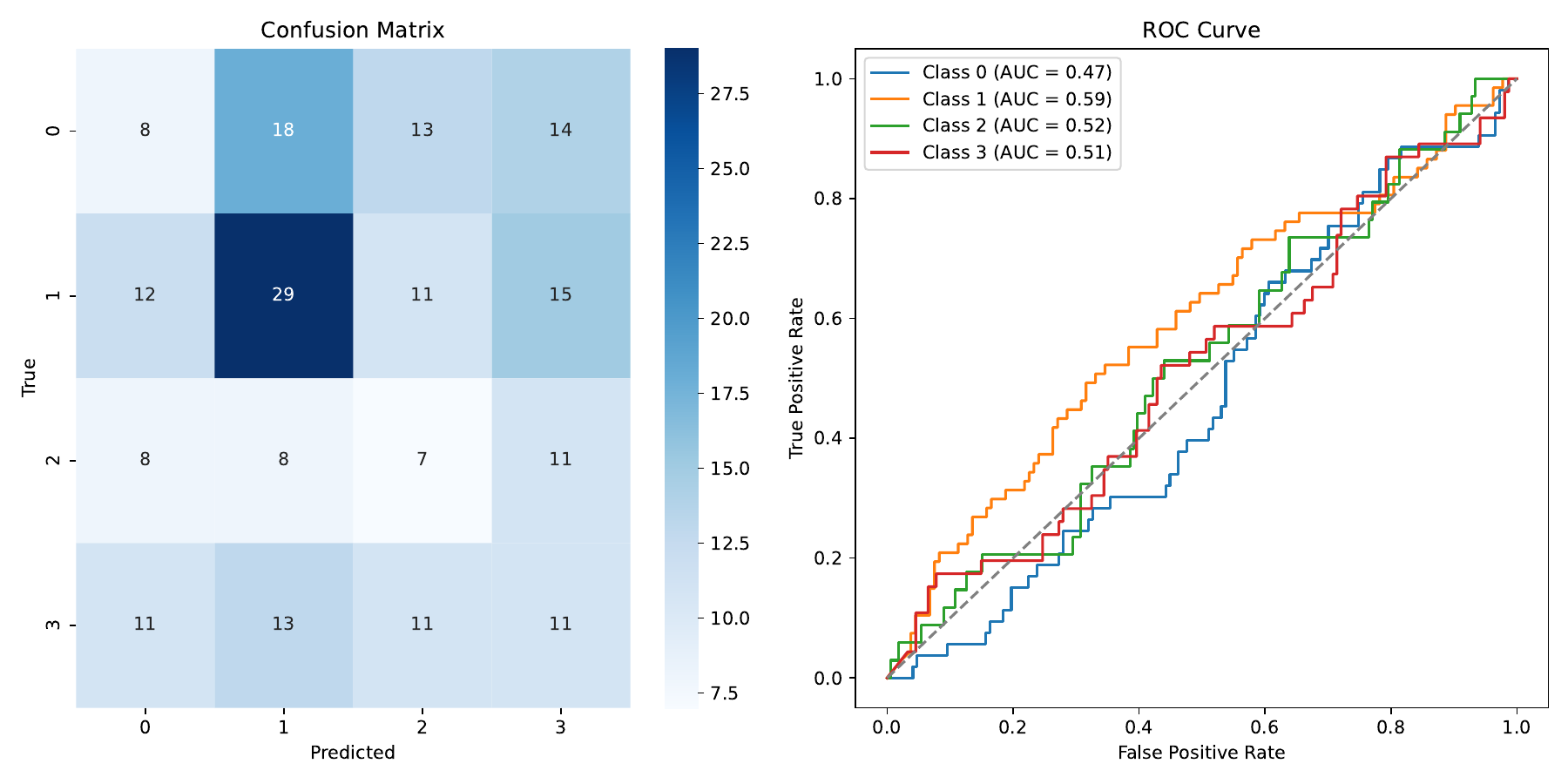}
    \caption{Confusion matrix and ROC curves for the convolutional neural network model.}
    \label{fig6}
\end{figure}

The violently spiking loss function in Fig. \ref{fig1} with increasing loss values from around $12\;000$ epochs is an artifact of training the model for so many epochs. A workaround ``cheat'' is to only train the model for around $10\;000$ epochs and report a nice smooth monotonically decreasing profile, however, by training the model for more epochs we were able to reduce the loss further, although we acknowledge that it is marginal.

We benchmark the GeogGNN model against a standard NN and  CNN in Tab. \ref{tab3}. 

\begin{table}[htpb]
\centering
\begin{tabular}{llllllll}
\hline 
\textbf{Model} &\textbf{Accuracy} &\textbf{F1} &\textbf{Precision} &\textbf{Recall} &\textbf{Log-Loss} &\textbf{AUC-ROC} &\textbf{AUC-PR} \\
\hline 
GeogGNN &0.696 &0.6959 &0.6954 &0.6960 &0.5723 &Class 0: 0.9411 &Class 0: 0.8239 \\
 & & & & & &Class 1: 0.9183 &Class 1: 0.8021 \\
 & & & & & &Class 2: 0.9102  &Class 2: 0.7219 \\
 & & & & & &Class 3: 0.9421 &Class 3: 0.8532 \\
\hline 
Standard NN &0.2500 &0.2534 &0.2651 &0.2500 &N/A &Class 0: 0.54 &Class 0: 0.29 \\
 & & & & & &Class 1: 0.49 &Class 1: 0.34 \\
 & & & & & &Class 2: 0.48 &Class 2: 0.15 \\
 & & & & & &Class 3: 0.53 &Class 3: 0.29 \\
\hline
CNN &0.2750 &0.2735 &0.2752 &0.2750 &N/A &Class 0: 0.47  & Class 0: 0.24 \\
 & & & & & &Class 1: 0.59 &Class 1: 0.42 \\
 & & & & & &Class 2: 0.52  &Class 2: 0.18 \\
 & & & & & &Class 3: 0.51 &Class 3: 0.27 \\ 
\hline  
\end{tabular}
\caption{Comparison of the metrics of the three models.}
\label{tab3}
\end{table}

From Tab. \ref{tab3}, we observe that the GeogGNN significantly outperforms the standard NN and CNN models across all metrics. Further, GeogGNN exhibits strong performance across all classes, with high accuracy, precision, recall, and AUC-ROC/AUC-PR scores. This indicates that the model is highly effective in distinguishing between different classes. This provides the impetus for Proposition \ref{error reduction} whereby we try to speak about the performance of NN architectures that take into account geographical weightings. 

The Standard NN performs poorly across all metrics, suggesting that it is not suitable for this classification task. We suspect that the performance is poor because the latitude and longitude coordinates are treated as features. The low accuracy and F1 score indicate that the model struggles to make correct predictions. The AUC-ROC and AUC-PR scores for different classes show significant disparities, indicating that the model may be biased toward certain classes. This cannot be attributed to a class imbalance because, as shown in Fig. \ref{fig10}, the classes are approximately balanced. 

The CNN's performance is better than the standard NN but still significantly worse than the GeogGNN. Further, the CNN model struggles to distinguish between different classes, as evidenced by the low AUC-ROC and AUC-PR scores.

The above analysis shows that the GeogGNN model is a powerful tool for handling complex, geographically distributed data. The model can effectively capture the underlying structure and dependencies between datapoints, as demonstrated with this synthetically-generated dataset. Traditional ML models like standard NN and CNN that treat the geographic coordinates as features may not be suitable for tasks involving complex relationships.


\section{Mathematical Analysis}\label{mathematical analysis}
In this section, we prove a fundamental result: GWNNs have a lower error than standard neural networks. By first establishing some definitions, we then provide some lemmas and thereafter prove our main result. Lastly, we comment on why this holds true. 

\begin{definition}\label{latitude and longitude}
Let $\mathcal{L}\subset\mathbb{R}^{2}$ be a geographic coordinate space with each $\mathcal{L}\ni\ell=\left(\text{lat},\text{long}\right)$ corresponding to latitude and longitude respectively.      
\end{definition}

\begin{definition}
Let $\mathcal{C}=\left[C\right]=\left\{1,2,\ldots,C\right\}$ be the set of class labels.     
\end{definition}

\begin{definition}
Let $h^{*}:\mathcal{L}\to\mathbb{R}^{C}$ be a true classification function with $\ell\mapsto c$ for all $\ell\in \mathcal{L}$ and $c\in\mathcal{C}$. 
\end{definition}

\begin{definition}
Let $h:\mathcal{L}\to\mathbb{R}^{C}$ that approximates $h^{*}$ by generating an output vector of class probabilities for each point $\ell\in\mathcal{L}$ noting that $h$ does not explicitly incorporate any spatial dependence among points in $\ell\in\mathcal{L}$. 
\end{definition}

\begin{definition}
A Geographically Weighted Neural Network (GWNN) modifies $h$ such that it includes a spatial weighting function $\mathcal{W}:\mathcal{L}\times\mathcal{L}\to\left[0,1\right]$ that adjusts the influence of each datapoint based on its proximity to other points in $\mathcal{L}$. This results in a weighted network function $h_{\mathcal{W}}$ defined as
\begin{equation*}
h_{\mathcal{W}}\overset{.}{=}\int_{\mathcal{L}}K\left(||\ell-\ell'||\right)f(\ell')\;\text{d}\ell,
\end{equation*}
where $K\left(||\ell-\ell'||\right)=\mathcal{W}(\ell,\ell')$ is the kernel centered at $\ell$. 
\end{definition}

\begin{definition}\label{error kernel}
Let $\mathcal{E}(\ell)$ be the error functional that measures the discrepancy between the true classification function $h^{*}$ and the approximated function $h$ given by
\begin{equation*}
\mathcal{E}(h)\overset{.}{=}\int_{\mathcal{L}}||h(\ell)-h^{*}(\ell)||^{2}\;\text{d}\ell. 
\end{equation*}
Similarly, we define the error of the GWNN as
\begin{equation*}
\mathcal{E}(h_{\mathcal{W}})\overset{.}{=}\int_{\mathcal{L}}||h_{\mathcal{W}}(\ell)-h_{\mathcal{W}}^{*}(\ell)||^{2}\;\text{d}\ell. 
\end{equation*}
\end{definition}

We want to show that $\mathcal{E}(h_{\mathcal{W}})<\mathcal{E}(h)$. This shows that the GWNN provides a lower classification error than the standard NN on spatial data. Before doing so, we establish two important results that will feed into out proof. 

\begin{lemma}\label{spatial continuity}
(Spatial continuity of $h^{*}$). Let $\left(L,d\right)$ be a metric space endowed with the Euclidean notion of distance, i.e., $d=||\ell-\ell'||$ for $\ell,\ell'\in L$. Then, points that are geographically close (often) belong to the same class, and $h^{*}$ varies smoothly over $L$.    
\end{lemma}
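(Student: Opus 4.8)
The plan is to recognize that this statement encodes Tobler's First Law of Geography --- ``near things are more related than distant things'' --- as a regularity hypothesis on $h^{*}$, and then to extract its precise mathematical content. The claim is not derivable from the metric-space structure alone; rather, the natural strategy is to \emph{formalize} ``varies smoothly'' as Lipschitz continuity and then \emph{deduce} the class-consistency clause as a corollary of that regularity.

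First, I would make the notion of smoothness precise. The cleanest formalization is to posit that $h^{*}$ is Lipschitz with respect to $d$: there exists a constant $\kappa>0$ such that
\begin{equation*}
||h^{*}(\ell)-h^{*}(\ell')||\leq\kappa\,d(\ell,\ell')\quad\text{for all}\;\ell,\ell'\in L.
\end{equation*}
Since $h^{*}$ takes values in the probability simplex in $\mathbb{R}^{C}$, this says the class-probability vector cannot change abruptly between nearby locations. I would justify this hypothesis by appeal to the spatial nature of the data: the physical and socioeconomic drivers of cybercrime vary continuously across geography, so the conditional class distribution inherits that continuity.

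Second, I would convert the Lipschitz bound into the assertion that geographically close points \emph{often} share a label. Define the predicted label $c(\ell)=\arg\max_{k}h^{*}_{k}(\ell)$ and the margin $m(\ell)=h^{*}_{(1)}(\ell)-h^{*}_{(2)}(\ell)$, the gap between the largest and second-largest coordinates. The key step is a margin argument: because each coordinate satisfies $|h^{*}_{k}(\ell)-h^{*}_{k}(\ell')|\leq\kappa\,d(\ell,\ell')$, whenever $d(\ell,\ell')<m(\ell)/(2\kappa)$ the top coordinate at $\ell$ exceeds every competitor at $\ell'$, so the maximizing index is preserved and $c(\ell')=c(\ell)$. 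Hence, away from the decision boundary $\{\ell:m(\ell)=0\}$, every point admits a neighborhood of positive radius on which the class is constant --- precisely the ``often belong to the same class'' clause, the qualifier ``often'' absorbing the (typically measure-zero) boundary where ties occur.

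The main obstacle is conceptual rather than technical: the lemma as phrased is really a modeling postulate, so the honest core of the argument is to state the Lipschitz (or, more weakly, H\"older) regularity as the load-bearing hypothesis and make transparent that the remainder is merely unpacking its implications. The only genuinely quantitative step --- the margin calculation --- is elementary, bounding the perturbation of each component by the norm and invoking continuity of the argmax away from ties; the delicate part is framing the result so that the smoothness assumption is visibly the essential ingredient, rather than something spuriously derived from the bare metric. This regularity is exactly what the later local-averaging step of the GWNN exploits, which is why establishing it here is the right foundation for the subsequent error comparison.
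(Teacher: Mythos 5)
Your proposal is correct in substance but takes a genuinely different --- and more defensible --- route than the paper's own proof. The paper's proof attempts an $\varepsilon$--$\delta$ argument, but it is essentially circular: it asserts ``there exists $\delta>0$ such that $d(\ell,\ell')<\delta$'' with no continuity hypothesis from which such a $\delta$ could come; its displayed inequality $|h^{*}(\ell)-h(\ell)|<\varepsilon$ compares $h^{*}$ to the approximating network $h$ rather than comparing $h^{*}$ at two nearby points; the class-consistency clause is then simply declared ``WLOG''; and smoothness is ``inferred'' from continuity, which is not a valid implication. You instead diagnose the lemma for what it is --- a modeling postulate in the spirit of Tobler's first law --- promote the regularity to an explicit Lipschitz hypothesis $||h^{*}(\ell)-h^{*}(\ell')||\leq\kappa\,d(\ell,\ell')$, and then genuinely \emph{derive} the label-consistency clause via the margin argument: each coordinate inherits the Lipschitz bound, so $d(\ell,\ell')<m(\ell)/(2\kappa)$ forces the argmax to be preserved, with ``often'' absorbing the (typically measure-zero) tie set where $m(\ell)=0$. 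That computation is correct, and it buys something the paper's version does not: a quantitative radius of label constancy, which is precisely the regularity that the kernel-averaging step in Lemma \ref{local averaging} and the error comparison in Proposition \ref{error reduction} actually consume. Two caveats for honesty's sake: Lipschitz continuity still falls short of literal smoothness (differentiability), so your formalization, like the paper's, delivers only continuity with a modulus --- though the paper commits the same conflation explicitly; and since the load-bearing Lipschitz assumption is introduced rather than derived, your argument is, as you yourself concede, a reformulation of the lemma as hypothesis-plus-corollary rather than a proof from the stated premises --- which is the only tenable reading, since nothing of this kind follows from the bare metric structure of $(L,d)$.
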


\begin{proof}
Let $\varepsilon>0$ and $h^{*},h$ be the classification and approximate $h^{*}$ functions respectively. Then, there exists $\delta>0$ such that $d(l,l')<\delta$. This implies that
\begin{equation*}
|h^{*}(\ell)-h(\ell)|<\varepsilon.
\end{equation*}
This shows that points that are close together are WLOG in the same class, whereas those that are further apart are not in the same class. In addition, $h^{*}$ is continuous, and thus, it is inferred as smooth.
\end{proof}

\begin{lemma}\label{local averaging}
(Local averaging of $h^{*}$ around $\ell$). For the geographically weighted neural network $h_{\mathcal{W}}$ we define the exponentially decaying kernel with Gaussian profile
\begin{equation*}
\mathcal{W}(\ell,\ell')=K(||\ell-\ell'||)-e^{-\xi d^{2}}, \forall\;\ell,\ell'\in L,    
\end{equation*}
where $\xi$ is the decay parameter constant. Then
\begin{enumerate}
\item The spatial weighting in $h_{\mathcal{W}}$ produces a local averaging effect on $h^{*}$ around point $\ell$. 
\item This averaging effect reduces the variance of $h_{\mathcal{W}}$ compared to $h$ in regions where $h^{*}$ varies smoothly due to spatial continuity. 
\end{enumerate}
\end{lemma}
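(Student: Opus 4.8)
The plan is to read $h_{\mathcal{W}}$ as a normalized kernel smoother and then borrow the classical bias--variance analysis of such estimators. First I would normalize the Gaussian kernel: since $K(||\ell-\ell'||)=e^{-\xi||\ell-\ell'||^{2}}$ is strictly positive and integrable over $\mathcal{L}$, I set
\[
\tilde{\mathcal{W}}(\ell,\ell')=\frac{K(||\ell-\ell'||)}{\displaystyle\int_{\mathcal{L}}K(||\ell-\ell''||)\,\text{d}\ell''},
\]
so that $\tilde{\mathcal{W}}\ge 0$ and $\int_{\mathcal{L}}\tilde{\mathcal{W}}(\ell,\ell')\,\text{d}\ell'=1$. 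Up to this normalizing factor, $h_{\mathcal{W}}(\ell)$ is then a convex combination of the values $f(\ell')$, and because the weight $e^{-\xi d^{2}}$ decays exponentially in $d=||\ell-\ell'||$, essentially all of the mass sits on points in a neighborhood of $\ell$ whose effective radius shrinks as $\xi$ increases. This is exactly claim (1): $h_{\mathcal{W}}(\ell)$ is a \emph{local} weighted average of $h^{*}$ (plus observational noise) centered at $\ell$.

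For claim (2) I would make the notion of ``variance'' precise by adopting the signal-plus-noise model $f(\ell')=h^{*}(\ell')+\eta(\ell')$, where $\eta$ is a zero-mean noise field with variance $\sigma^{2}$ and negligible spatial correlation; this is the same smooth regime in which Lemma \ref{spatial continuity} applies. The unweighted network simply evaluates $h(\ell)=f(\ell)$, so $\mathrm{Var}[h(\ell)]=\sigma^{2}$. Passing to the discrete node setting of Algorithm \ref{algo1}, write $w_{ij}=\tilde{\mathcal{W}}(\ell_{i},\ell_{j})$ with $w_{ij}\ge 0$ and $\sum_{j}w_{ij}=1$; substituting the model and using independence of the $\eta_{j}$ gives
\[
\mathrm{Var}\!\left[h_{\mathcal{W}}(\ell_{i})\right]=\sigma^{2}\sum_{j}w_{ij}^{2}.
\]
The decisive inequality is $\sum_{j}w_{ij}^{2}\le\big(\max_{j}w_{ij}\big)\sum_{j}w_{ij}=\max_{j}w_{ij}\le 1$, with strict inequality as soon as the kernel spreads positive weight over more than one node --- which holds for every finite $\xi$ whenever $\ell_{i}$ has at least one neighbor. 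Hence $\mathrm{Var}[h_{\mathcal{W}}(\ell_{i})]<\sigma^{2}=\mathrm{Var}[h(\ell_{i})]$, establishing the variance reduction.

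It then remains to argue that this reduction is genuine rather than an artifact of oversmoothing. Here I would invoke Lemma \ref{spatial continuity}: in regions where $h^{*}$ varies smoothly, every neighbor $\ell_{j}$ carrying appreciable weight satisfies $h^{*}(\ell_{j})\approx h^{*}(\ell_{i})$, so the local averaging leaves the signal essentially unperturbed while damping the noise. I expect the main obstacle to be making the stochastic statement rigorous --- specifically, justifying the uncorrelated-noise hypothesis and, in the continuous formulation of Definition \ref{error kernel}, controlling $\int_{\mathcal{L}}\tilde{\mathcal{W}}(\ell,\ell')^{2}\,\text{d}\ell'$, since pointwise white noise has ill-defined (formally infinite) variance and $\tilde{\mathcal{W}}\le 1$ is not automatic for a probability density. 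The cleanest resolution is to carry out the argument entirely in the discrete node setting, where the weights form a bona fide probability vector and $\sum_{j}w_{ij}^{2}<1$ is elementary, and to recover the continuous claim as its mesh limit.
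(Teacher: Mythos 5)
Your proposal is correct within the stochastic model you set up, but it takes a genuinely different and substantially more explicit route than the paper. The paper's own proof stays entirely in the continuous formulation: it writes the mean-squared error as $\mathbb{E}\{[h_{\mathcal{W}}(\ell)-h^{*}(\ell)]^{2}\}=\mathbb{E}\{\int_{L}[K(||\ell-\ell'||)(h(\ell')-h^{*}(\ell))\,\text{d}\ell]^{2}\}$, invokes Lemma \ref{spatial continuity} to say the kernel ``concentrates the integral on points $\ell'$ near $\ell$,'' and concludes variance reduction directly from that concentration --- no noise model, no normalization of the kernel, and no explicit variance computation appear. You instead import the classical bias--variance analysis of kernel smoothers: you normalize $K$ into a probability kernel (silently repairing the paper's apparent typo $\mathcal{W}=K-e^{-\xi d^{2}}$, where the minus sign should evidently be a definition $K=e^{-\xi d^{2}}$), posit a signal-plus-noise model $f=h^{*}+\eta$ with independent zero-mean noise, derive $\mathrm{Var}[h_{\mathcal{W}}(\ell_{i})]=\sigma^{2}\sum_{j}w_{ij}^{2}$ in the discrete node setting, and close with the elementary inequality $\sum_{j}w_{ij}^{2}\le\max_{j}w_{ij}\le 1$, strict whenever weight spreads over two or more nodes. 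What your route buys is an actual checkable argument: each step is verifiable, the meaning of ``variance'' is pinned down, and your observation that pointwise white noise makes the continuum statement ill-posed (forcing control of $\int_{\mathcal{L}}\tilde{\mathcal{W}}(\ell,\ell')^{2}\,\text{d}\ell'$ or retreat to the discrete graph) identifies precisely the gap the paper's one-line concentration argument glosses over. What the paper's route buys is brevity and formal consistency with Definition \ref{error kernel}, at the cost of being heuristic --- it asserts rather than proves the variance reduction. One caveat on your side: your claim (2) as stated only bounds the noise variance; the assertion that smoothing ``leaves the signal essentially unperturbed'' relies on a quantitative modulus of continuity for $h^{*}$ that Lemma \ref{spatial continuity} (itself informal in the paper) does not supply, so to be fully rigorous you would need to add an explicit bias bound, e.g., a Lipschitz hypothesis on $h^{*}$ paired with the effective kernel radius $O(\xi^{-1/2})$.
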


\begin{proof}
The mean-squared error between $h_{\mathcal{W}}$ and $h^{*}$ is given by
\begin{equation*}
\mathbb{E}\left\{\left[h_{\mathcal{W}}(\ell)-h^{*}(\ell)\right]^{2}\right\}\overset{.}{=}\mathbb{E}\left\{\int_{L}\left[K(||\ell-\ell'||)\left(h(\ell')-h^{*}(\ell)\right)\;\text{d}\ell\right]^{2}\right\}.
\end{equation*}
By Lemma \ref{spatial continuity} above, $h^{*}$ is continuous. This implies that the kernel $K(||\ell-\ell'||)$ concentrates the integral on points $\ell'$ near $\ell$. Thus, this reduces the variance of $h_{\mathcal{W}}$ around $h^{*}$. 
\end{proof}

\begin{proposition}\label{error reduction}
For a true classification function $h^{*}$ that is spatially continuous, a GWNN $h_{\mathcal{W}}$ yields a lower error than a standard neural network $h$.    
\end{proposition}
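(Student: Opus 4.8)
The plan is to prove $\mathcal{E}(h_{\mathcal{W}}) < \mathcal{E}(h)$ through a bias--variance decomposition of the pointwise mean-squared error, combining the two lemmas already established. At any point $\ell \in L$ I would write the expected squared error of the weighted estimator as
\begin{equation*}
\mathbb{E}\left\{\left[h_{\mathcal{W}}(\ell)-h^{*}(\ell)\right]^{2}\right\} = \left(\mathbb{E}\left[h_{\mathcal{W}}(\ell)\right]-h^{*}(\ell)\right)^{2} + \mathrm{Var}\left(h_{\mathcal{W}}(\ell)\right),
\end{equation*}
and integrate over $\mathcal{L}$ to recover $\mathcal{E}(h_{\mathcal{W}})$, with the analogous decomposition applying to $\mathcal{E}(h)$. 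The strategy is then to show that the kernel-induced local averaging strictly reduces the variance term while only marginally inflating the squared-bias term, so that the net functional decreases.

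Next I would invoke Lemma \ref{local averaging}: because the Gaussian kernel $\mathcal{W}(\ell,\ell')=e^{-\xi d^{2}}$ concentrates its mass on a neighbourhood of $\ell$, the estimator $h_{\mathcal{W}}$ behaves as an average of $h$ over many nearby points. Averaging of (approximately) independent fluctuations shrinks the variance by a factor governed by the effective number of points falling inside the kernel bandwidth, yielding $\mathrm{Var}(h_{\mathcal{W}}(\ell)) \leq \mathrm{Var}(h(\ell))$ with strict inequality wherever the kernel aggregates more than one point. This supplies the variance-reduction half of the inequality.

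To control the bias I would appeal to Lemma \ref{spatial continuity}. Writing the bias as the gap between $h^{*}(\ell)$ and its kernel-weighted average, the Lipschitz-type smoothness of $h^{*}$ lets me bound
\begin{equation*}
\left|\int_{L} K(||\ell-\ell'||)\,h^{*}(\ell')\,\text{d}\ell' - h^{*}(\ell)\right| \leq L_{h^{*}}\int_{L} K(||\ell-\ell'||)\,||\ell-\ell'||\,\text{d}\ell',
\end{equation*}
where $L_{h^{*}}$ is the Lipschitz constant of $h^{*}$. The right-hand integral is $O(\xi^{-1/2})$ and hence shrinks as the kernel sharpens, so the squared bias introduced by smoothing is small precisely in the regime where $h^{*}$ varies smoothly.

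Finally I would combine the two estimates: adding a bounded, shrinking squared-bias term to a strictly reduced variance term, and selecting the decay parameter $\xi$ in the range where the variance gain dominates, gives $\mathcal{E}(h_{\mathcal{W}}) < \mathcal{E}(h)$. The main obstacle is this balancing step --- making rigorous the claim that the variance reduction strictly outweighs the bias inflation, which demands a quantitative lower bound on the variance decrease together with a matching upper bound on the squared bias, and then the exhibition of a non-empty range of bandwidths $\xi$ for which their sum is negative. The cleanest route is to treat the standard network $h$ as an essentially unbiased but noisy estimator of $h^{*}$, in which case any strict variance reduction immediately settles the inequality and the smoothness of $h^{*}$ guarantees the added bias stays subdominant.
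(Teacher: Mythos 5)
Your bias--variance route is genuinely different from, and more demanding than, the paper's own argument. The paper's proof is purely qualitative: it writes down $\mathcal{E}(h)$, rewrites $\mathcal{E}(h_{\mathcal{W}})$ as the double integral $\int_{L}\left|\left|\int_{L}K(\ell-\ell')\left[h(\ell')-h^{*}(\ell)\right]\text{d}\ell'\right|\right|^{2}\text{d}\ell$, invokes Lemma \ref{local averaging} to say that local averaging respects the spatial continuity of $h^{*}$, and concludes the strict inequality directly --- it never isolates a bias term, never bounds it, and never chooses a bandwidth. Your decomposition into squared bias plus variance, with the Lipschitz bound of order $O(\xi^{-1/2})$ on the smoothing bias, makes explicit exactly the trade-off the paper elides, and in that respect your sketch is a more honest account of what a complete proof would require.

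However, the balancing step you flag as the ``main obstacle'' is not a technicality to be deferred --- it is the entire content of the proposition, and it cannot be closed from the stated hypotheses. First, your expectation and variance presuppose a stochastic model for $h$ (label noise, randomness in training) that neither Definition \ref{error kernel} nor either lemma supplies: $\mathcal{E}$ as defined is a deterministic functional, and for a deterministic $h$ there is no variance to reduce, so the decomposition in your first display has no meaning until such a model is added. Second, without your closing assumption that $h$ is an essentially unbiased but noisy estimator of $h^{*}$, the claim is false as stated: take $h=h^{*}$, so that $\mathcal{E}(h)=0$, while smoothing a nonconstant $h^{*}$ yields $\mathcal{E}(h_{\mathcal{W}})>0$. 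Hence the ``cleanest route'' you mention at the end is not an optional simplification but a hidden hypothesis that must be promoted into the statement of Proposition \ref{error reduction} itself. A further small repair: your bias bound silently assumes the kernel is normalized, $\int_{L}K(||\ell-\ell'||)\,\text{d}\ell'=1$, whereas the paper's Gaussian kernel is unnormalized (and its definition in Lemma \ref{local averaging} even contains a sign typo), so $h_{\mathcal{W}}$ is not an average as written. To be fair, the paper's own proof shares every one of these gaps --- the authors hedge by calling the result a proposition that ``may be falsified'' --- but your sketch, completed under your closing assumptions with a normalized kernel and an explicit noise model, would prove a correct theorem with strictly stronger hypotheses than the one stated.
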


\begin{proof}
We want to show that $\mathcal{E}(h_{\mathcal{W}})<\mathcal{E}(h)$. By definition,
\begin{equation*}
\mathcal{E}(h)\overset{.}{=}\int_{L}||h(\ell)-h^{*}(\ell)||^{2}\;\text{d}\ell.
\end{equation*}
With spatial weighting, $h$ lacks the ability to exploit the spatial continuity of $h^{*}$. This leads to abrupt classification perturbations even for geographical points that are close. Now, the error for $h_{\mathcal{W}}$ is given by
\begin{equation*}
\mathcal{E}(h_{\mathcal{W}})=\int_{L}\left|\left|\int_{L}K\left(\ell-\ell'\right)\left[h(\ell')-h^{*}(\ell)\right]\;\text{d}\ell'\right|\right|^{2}\;\text{d}\ell.     
\end{equation*}
By Lemma \ref{local averaging}, spatial weighting provides local averaging around $\ell$. This respects the spatial continuity of $h^{*}$ and therefore reduces errors in regions where $h^{*}$ has a smooth variation. Thus, $\mathcal{E}(h_{\mathcal{W}})<\mathcal{E}(h)$ due to the localized smoothing effect.  
\end{proof}

\begin{remark}
The implications of Proposition \ref{error reduction} is that a GWNN is theoretically more accurate for classifying spatial data where classes exhibit spatial continuity. Further, Proposition \ref{error reduction} also demonstrates that GWNNs achieve better accuracy on spatial classification tasks because they better exploit the intrinsic spatial correlation present in the data.
\end{remark}

While our result may present a major stepping stone to demonstrating why spatial coordinates treated as features produce less accuracy than those that are geographically weighted, it may be possible that our result is falsified through experimentation. Thus, as a ``play it safe'' idea, we state our result as a proposition as opposed to a theorem. 


\section{Conclusion}\label{conclusion}
We have successfully shown that our proposed GeogGNN model outperforms standard NNs and CNNs in classification tasks by using a synthetically generated dataset. The next step would be to apply the model to real-world data and assess its performance. We have discovered that the way not to proceed is to not treat the geographical coordinates as features but rather to assign a geographical weighting scheme to them to make the model richer. 

We have also shown theoretically that NN models with geographic weighting have better performance than those that do not account for geographic variations among datapoints. This important result will be extended to other ML model architectures in future work. 

Quantum GNNs (QGNNs) have shown immense promise and the ability to outperform classical GNNs as illustrated in studies like \cite{innan2024financial}. It would be interesting to ascertain how GeogGNN fairs against a QGNN and to extend GeogGNNs to quantum GeogGNNs, QGeogGNNs, which will be considered as an offshoot project of this work, together with deriving a quantum version of Proposition \ref{error reduction}.


\section*{Declarations}

\begin{itemize}
\item \textbf{Funding:} J.A.K. and M.A.Z. acknowledge that this research is supported by grant number 23070, provided by Zayed University and the government of the UAE.
\item \textbf{Conflict of interest/Competing interests:} The authors declare that there are no conflicts of interest. 
\item \textbf{Ethics approval and consent to participate:} None required.
\item \textbf{Consent for publication:} The authors grant full consent to the journal to publish this article.
\item \textbf{Data availability:} N/A
\item \textbf{Materials availability:} N/A
\item \textbf{Code availability:} N/A  
\item \textbf{Author contribution:} All authors have contributed equally to this research.
\end{itemize}


\begin{thebibliography}{46}
\ifx \bisbn   \undefined \def \bisbn  #1{ISBN #1}\fi
\ifx \binits  \undefined \def \binits#1{#1}\fi
\ifx \bauthor  \undefined \def \bauthor#1{#1}\fi
\ifx \batitle  \undefined \def \batitle#1{#1}\fi
\ifx \bjtitle  \undefined \def \bjtitle#1{#1}\fi
\ifx \bvolume  \undefined \def \bvolume#1{\textbf{#1}}\fi
\ifx \byear  \undefined \def \byear#1{#1}\fi
\ifx \bissue  \undefined \def \bissue#1{#1}\fi
\ifx \bfpage  \undefined \def \bfpage#1{#1}\fi
\ifx \blpage  \undefined \def \blpage #1{#1}\fi
\ifx \burl  \undefined \def \burl#1{\textsf{#1}}\fi
\ifx \doiurl  \undefined \def \doiurl#1{\url{https://doi.org/#1}}\fi
\ifx \betal  \undefined \def \betal{\textit{et al.}}\fi
\ifx \binstitute  \undefined \def \binstitute#1{#1}\fi
\ifx \binstitutionaled  \undefined \def \binstitutionaled#1{#1}\fi
\ifx \bctitle  \undefined \def \bctitle#1{#1}\fi
\ifx \beditor  \undefined \def \beditor#1{#1}\fi
\ifx \bpublisher  \undefined \def \bpublisher#1{#1}\fi
\ifx \bbtitle  \undefined \def \bbtitle#1{#1}\fi
\ifx \bedition  \undefined \def \bedition#1{#1}\fi
\ifx \bseriesno  \undefined \def \bseriesno#1{#1}\fi
\ifx \blocation  \undefined \def \blocation#1{#1}\fi
\ifx \bsertitle  \undefined \def \bsertitle#1{#1}\fi
\ifx \bsnm \undefined \def \bsnm#1{#1}\fi
\ifx \bsuffix \undefined \def \bsuffix#1{#1}\fi
\ifx \bparticle \undefined \def \bparticle#1{#1}\fi
\ifx \barticle \undefined \def \barticle#1{#1}\fi
\bibcommenthead
\ifx \bconfdate \undefined \def \bconfdate #1{#1}\fi
\ifx \botherref \undefined \def \botherref #1{#1}\fi
\ifx \url \undefined \def \url#1{\textsf{#1}}\fi
\ifx \bchapter \undefined \def \bchapter#1{#1}\fi
\ifx \bbook \undefined \def \bbook#1{#1}\fi
\ifx \bcomment \undefined \def \bcomment#1{#1}\fi
\ifx \oauthor \undefined \def \oauthor#1{#1}\fi
\ifx \citeauthoryear \undefined \def \citeauthoryear#1{#1}\fi
\ifx \endbibitem  \undefined \def \endbibitem {}\fi
\ifx \bconflocation  \undefined \def \bconflocation#1{#1}\fi
\ifx \arxivurl  \undefined \def \arxivurl#1{\textsf{#1}}\fi
\csname PreBibitemsHook\endcsname

\bibitem[\protect\citeauthoryear{Brunsdon et~al.}{1996}]{brunsdon1996geographically}
\begin{barticle}
\bauthor{\bsnm{Brunsdon}, \binits{C.}},
\bauthor{\bsnm{Fotheringham}, \binits{A.S.}},
\bauthor{\bsnm{Charlton}, \binits{M.E.}}:
\batitle{Geographically weighted regression: a method for exploring spatial nonstationarity}.
\bjtitle{Geographical analysis}
\bvolume{28}(\bissue{4}),
\bfpage{281}--\blpage{298}
(\byear{1996})
\end{barticle}
\endbibitem

\bibitem[\protect\citeauthoryear{Brunsdon et~al.}{1998}]{brunsdon1998geographically}
\begin{barticle}
\bauthor{\bsnm{Brunsdon}, \binits{C.}},
\bauthor{\bsnm{Fotheringham}, \binits{S.}},
\bauthor{\bsnm{Charlton}, \binits{M.}}:
\batitle{Geographically weighted regression}.
\bjtitle{Journal of the Royal Statistical Society: Series D (The Statistician)}
\bvolume{47}(\bissue{3}),
\bfpage{431}--\blpage{443}
(\byear{1998})
\end{barticle}
\endbibitem

\bibitem[\protect\citeauthoryear{Fotheringham et~al.}{2009}]{fotheringham2009geographically}
\begin{barticle}
\bauthor{\bsnm{Fotheringham}, \binits{A.S.}},
\bauthor{\bsnm{Brunsdon}, \binits{C.}},
\bauthor{\bsnm{Charlton}, \binits{M.}}:
\batitle{Geographically weighted regression}.
\bjtitle{The Sage handbook of spatial analysis}
\bvolume{1},
\bfpage{243}--\blpage{254}
(\byear{2009})
\end{barticle}
\endbibitem

\bibitem[\protect\citeauthoryear{Comber et~al.}{2023}]{comber2023route}
\begin{barticle}
\bauthor{\bsnm{Comber}, \binits{A.}},
\bauthor{\bsnm{Brunsdon}, \binits{C.}},
\bauthor{\bsnm{Charlton}, \binits{M.}},
\bauthor{\bsnm{Dong}, \binits{G.}},
\bauthor{\bsnm{Harris}, \binits{R.}},
\bauthor{\bsnm{Lu}, \binits{B.}},
\bauthor{\bsnm{L{\"u}}, \binits{Y.}},
\bauthor{\bsnm{Murakami}, \binits{D.}},
\bauthor{\bsnm{Nakaya}, \binits{T.}},
\bauthor{\bsnm{Wang}, \binits{Y.}}, \betal:
\batitle{A route map for successful applications of geographically weighted regression}.
\bjtitle{Geographical Analysis}
\bvolume{55}(\bissue{1}),
\bfpage{155}--\blpage{178}
(\byear{2023})
\end{barticle}
\endbibitem

\bibitem[\protect\citeauthoryear{Bevan and Conolly}{2009}]{bevan2009modelling}
\begin{barticle}
\bauthor{\bsnm{Bevan}, \binits{A.}},
\bauthor{\bsnm{Conolly}, \binits{J.}}:
\batitle{Modelling spatial heterogeneity and nonstationarity in artifact-rich landscapes}.
\bjtitle{Journal of Archaeological Science}
\bvolume{36}(\bissue{4}),
\bfpage{956}--\blpage{964}
(\byear{2009})
\end{barticle}
\endbibitem

\bibitem[\protect\citeauthoryear{Mennis}{2006}]{mennis2006mapping}
\begin{barticle}
\bauthor{\bsnm{Mennis}, \binits{J.}}:
\batitle{Mapping the results of geographically weighted regression}.
\bjtitle{The Cartographic Journal}
\bvolume{43}(\bissue{2}),
\bfpage{171}--\blpage{179}
(\byear{2006})
\end{barticle}
\endbibitem

\bibitem[\protect\citeauthoryear{Cardozo et~al.}{2012}]{cardozo2012application}
\begin{barticle}
\bauthor{\bsnm{Cardozo}, \binits{O.D.}},
\bauthor{\bsnm{Garc{\'\i}a-Palomares}, \binits{J.C.}},
\bauthor{\bsnm{Guti{\'e}rrez}, \binits{J.}}:
\batitle{Application of geographically weighted regression to the direct forecasting of transit ridership at station-level}.
\bjtitle{Applied geography}
\bvolume{34},
\bfpage{548}--\blpage{558}
(\byear{2012})
\end{barticle}
\endbibitem

\bibitem[\protect\citeauthoryear{Dziauddin et~al.}{2015}]{dziauddin2015estimating}
\begin{barticle}
\bauthor{\bsnm{Dziauddin}, \binits{M.F.}},
\bauthor{\bsnm{Powe}, \binits{N.}},
\bauthor{\bsnm{Alvanides}, \binits{S.}}:
\batitle{Estimating the effects of light rail transit (lrt) system on residential property values using geographically weighted regression (gwr)}.
\bjtitle{Applied Spatial Analysis and Policy}
\bvolume{8},
\bfpage{1}--\blpage{25}
(\byear{2015})
\end{barticle}
\endbibitem

\bibitem[\protect\citeauthoryear{Caset et~al.}{2020}]{caset2020integrating}
\begin{barticle}
\bauthor{\bsnm{Caset}, \binits{F.}},
\bauthor{\bsnm{Blainey}, \binits{S.}},
\bauthor{\bsnm{Derudder}, \binits{B.}},
\bauthor{\bsnm{Boussauw}, \binits{K.}},
\bauthor{\bsnm{Witlox}, \binits{F.}}:
\batitle{Integrating node-place and trip end models to explore drivers of rail ridership in flanders, belgium}.
\bjtitle{Journal of Transport Geography}
\bvolume{87},
\bfpage{102796}
(\byear{2020})
\end{barticle}
\endbibitem

\bibitem[\protect\citeauthoryear{Gilbert and Chakraborty}{2011}]{gilbert2011using}
\begin{barticle}
\bauthor{\bsnm{Gilbert}, \binits{A.}},
\bauthor{\bsnm{Chakraborty}, \binits{J.}}:
\batitle{Using geographically weighted regression for environmental justice analysis: Cumulative cancer risks from air toxics in florida}.
\bjtitle{Social Science Research}
\bvolume{40}(\bissue{1}),
\bfpage{273}--\blpage{286}
(\byear{2011})
\end{barticle}
\endbibitem

\bibitem[\protect\citeauthoryear{Chakraborty et~al.}{2022}]{chakraborty2022exploring}
\begin{barticle}
\bauthor{\bsnm{Chakraborty}, \binits{L.}},
\bauthor{\bsnm{Rus}, \binits{H.}},
\bauthor{\bsnm{Henstra}, \binits{D.}},
\bauthor{\bsnm{Thistlethwaite}, \binits{J.}},
\bauthor{\bsnm{Minano}, \binits{A.}},
\bauthor{\bsnm{Scott}, \binits{D.}}:
\batitle{Exploring spatial heterogeneity and environmental injustices in exposure to flood hazards using geographically weighted regression}.
\bjtitle{Environmental research}
\bvolume{210},
\bfpage{112982}
(\byear{2022})
\end{barticle}
\endbibitem

\bibitem[\protect\citeauthoryear{Ma and Fan}{2023}]{ma2023influential}
\begin{barticle}
\bauthor{\bsnm{Ma}, \binits{Z.}},
\bauthor{\bsnm{Fan}, \binits{H.}}:
\batitle{Influential factors of tuberculosis in mainland china based on mgwr model}.
\bjtitle{Plos one}
\bvolume{18}(\bissue{8}),
\bfpage{0290978}
(\byear{2023})
\end{barticle}
\endbibitem

\bibitem[\protect\citeauthoryear{Hassaan et~al.}{2021}]{hassaan2021gis}
\begin{barticle}
\bauthor{\bsnm{Hassaan}, \binits{M.A.}},
\bauthor{\bsnm{Abdelwahab}, \binits{R.G.}},
\bauthor{\bsnm{Elbarky}, \binits{T.A.}},
\bauthor{\bsnm{Ghazy}, \binits{R.M.}}:
\batitle{Gis-based analysis framework to identify the determinants of covid-19 incidence and fatality in africa}.
\bjtitle{Journal of Primary Care \& Community Health}
\bvolume{12},
\bfpage{21501327211041208}
(\byear{2021})
\end{barticle}
\endbibitem

\bibitem[\protect\citeauthoryear{Mohammadi et~al.}{2023}]{mohammadi2023covid}
\begin{barticle}
\bauthor{\bsnm{Mohammadi}, \binits{A.}},
\bauthor{\bsnm{Pishgar}, \binits{E.}},
\bauthor{\bsnm{Fatima}, \binits{M.}},
\bauthor{\bsnm{Lotfata}, \binits{A.}},
\bauthor{\bsnm{Fanni}, \binits{Z.}},
\bauthor{\bsnm{Bergquist}, \binits{R.}},
\bauthor{\bsnm{Kiani}, \binits{B.}}:
\batitle{The covid-19 mortality rate is associated with illiteracy, age, and air pollution in urban neighborhoods: A spatiotemporal cross-sectional analysis}.
\bjtitle{Tropical Medicine and Infectious Disease}
\bvolume{8}(\bissue{2}),
\bfpage{85}
(\byear{2023})
\end{barticle}
\endbibitem

\bibitem[\protect\citeauthoryear{Corner et~al.}{2013}]{corner2013modelling}
\begin{barticle}
\bauthor{\bsnm{Corner}, \binits{R.J.}},
\bauthor{\bsnm{Dewan}, \binits{A.M.}},
\bauthor{\bsnm{Hashizume}, \binits{M.}}:
\batitle{Modelling typhoid risk in dhaka metropolitan area of bangladesh: the role of socio-economic and environmental factors}.
\bjtitle{International journal of health geographics}
\bvolume{12},
\bfpage{1}--\blpage{15}
(\byear{2013})
\end{barticle}
\endbibitem

\bibitem[\protect\citeauthoryear{Melaku et~al.}{2022}]{melaku2022geographical}
\begin{barticle}
\bauthor{\bsnm{Melaku}, \binits{M.S.}},
\bauthor{\bsnm{Aemro}, \binits{A.}},
\bauthor{\bsnm{Aychiluhm}, \binits{S.B.}},
\bauthor{\bsnm{Muche}, \binits{A.}},
\bauthor{\bsnm{Bizuneh}, \binits{G.K.}},
\bauthor{\bsnm{Kebede}, \binits{S.D.}}:
\batitle{Geographical variation and predictors of zero utilization for a standard maternal continuum of care among women in ethiopia: a spatial and geographically weighted regression analysis}.
\bjtitle{BMC Pregnancy and Childbirth}
\bvolume{22}(\bissue{1}),
\bfpage{76}
(\byear{2022})
\end{barticle}
\endbibitem

\bibitem[\protect\citeauthoryear{Lin and Wen}{2011}]{lin2011using}
\begin{barticle}
\bauthor{\bsnm{Lin}, \binits{C.-H.}},
\bauthor{\bsnm{Wen}, \binits{T.-H.}}:
\batitle{Using geographically weighted regression (gwr) to explore spatial varying relationships of immature mosquitoes and human densities with the incidence of dengue}.
\bjtitle{International journal of environmental research and public health}
\bvolume{8}(\bissue{7}),
\bfpage{2798}--\blpage{2815}
(\byear{2011})
\end{barticle}
\endbibitem

\bibitem[\protect\citeauthoryear{Wang et~al.}{2019}]{wang2019examining}
\begin{barticle}
\bauthor{\bsnm{Wang}, \binits{J.}},
\bauthor{\bsnm{Wang}, \binits{S.}},
\bauthor{\bsnm{Li}, \binits{S.}}:
\batitle{Examining the spatially varying effects of factors on pm2. 5 concentrations in chinese cities using geographically weighted regression modeling}.
\bjtitle{Environmental pollution}
\bvolume{248},
\bfpage{792}--\blpage{803}
(\byear{2019})
\end{barticle}
\endbibitem

\bibitem[\protect\citeauthoryear{Windle et~al.}{2010}]{windle2010exploring}
\begin{barticle}
\bauthor{\bsnm{Windle}, \binits{M.J.}},
\bauthor{\bsnm{Rose}, \binits{G.A.}},
\bauthor{\bsnm{Devillers}, \binits{R.}},
\bauthor{\bsnm{Fortin}, \binits{M.-J.}}:
\batitle{Exploring spatial non-stationarity of fisheries survey data using geographically weighted regression (gwr): an example from the northwest atlantic}.
\bjtitle{ICES Journal of Marine Science}
\bvolume{67}(\bissue{1}),
\bfpage{145}--\blpage{154}
(\byear{2010})
\end{barticle}
\endbibitem

\bibitem[\protect\citeauthoryear{Okwori et~al.}{2021}]{okwori2021spatial}
\begin{barticle}
\bauthor{\bsnm{Okwori}, \binits{E.}},
\bauthor{\bsnm{Viklander}, \binits{M.}},
\bauthor{\bsnm{Hedstr{\"o}m}, \binits{A.}}:
\batitle{Spatial heterogeneity assessment of factors affecting sewer pipe blockages and predictions}.
\bjtitle{Water Research}
\bvolume{194},
\bfpage{116934}
(\byear{2021})
\end{barticle}
\endbibitem

\bibitem[\protect\citeauthoryear{Taghipour~Javi et~al.}{2014}]{taghipour2014application}
\begin{barticle}
\bauthor{\bsnm{Taghipour~Javi}, \binits{S.}},
\bauthor{\bsnm{Malekmohammadi}, \binits{B.}},
\bauthor{\bsnm{Mokhtari}, \binits{H.}}:
\batitle{Application of geographically weighted regression model to analysis of spatiotemporal varying relationships between groundwater quantity and land use changes (case study: Khanmirza plain, iran)}.
\bjtitle{Environmental monitoring and assessment}
\bvolume{186},
\bfpage{3123}--\blpage{3138}
(\byear{2014})
\end{barticle}
\endbibitem

\bibitem[\protect\citeauthoryear{Lewandowska-Gwarda}{2018}]{lewandowska2018geographically}
\begin{barticle}
\bauthor{\bsnm{Lewandowska-Gwarda}, \binits{K.}}:
\batitle{Geographically weighted regression in the analysis of unemployment in poland}.
\bjtitle{ISPRS International Journal of Geo-Information}
\bvolume{7}(\bissue{1}),
\bfpage{17}
(\byear{2018})
\end{barticle}
\endbibitem

\bibitem[\protect\citeauthoryear{Craig et~al.}{2022}]{craig2022social}
\begin{barticle}
\bauthor{\bsnm{Craig}, \binits{A.}},
\bauthor{\bsnm{Hutton}, \binits{C.W.}},
\bauthor{\bsnm{Sheffield}, \binits{J.}}:
\batitle{Social capital typologies and sustainable development: Spatial patterns in the central and southern regions of malawi}.
\bjtitle{Sustainability}
\bvolume{14}(\bissue{15}),
\bfpage{9374}
(\byear{2022})
\end{barticle}
\endbibitem

\bibitem[\protect\citeauthoryear{Nkeki and Asikhia}{2019}]{nkeki2019geographically}
\begin{barticle}
\bauthor{\bsnm{Nkeki}, \binits{F.N.}},
\bauthor{\bsnm{Asikhia}, \binits{M.O.}}:
\batitle{Geographically weighted logistic regression approach to explore the spatial variability in travel behaviour and built environment interactions: Accounting simultaneously for demographic and socioeconomic characteristics}.
\bjtitle{Applied geography}
\bvolume{108},
\bfpage{47}--\blpage{63}
(\byear{2019})
\end{barticle}
\endbibitem

\bibitem[\protect\citeauthoryear{Zafri and Khan}{2023}]{zafri2023using}
\begin{barticle}
\bauthor{\bsnm{Zafri}, \binits{N.M.}},
\bauthor{\bsnm{Khan}, \binits{A.}}:
\batitle{Using geographically weighted logistic regression (gwlr) for pedestrian crash severity modeling: Exploring spatially varying relationships with natural and built environment factors}.
\bjtitle{IATSS research}
\bvolume{47}(\bissue{3}),
\bfpage{325}--\blpage{334}
(\byear{2023})
\end{barticle}
\endbibitem

\bibitem[\protect\citeauthoryear{Brunsdon et~al.}{2007}]{brunsdon2007geographically}
\begin{barticle}
\bauthor{\bsnm{Brunsdon}, \binits{C.}},
\bauthor{\bsnm{Fotheringham}, \binits{S.}},
\bauthor{\bsnm{Charlton}, \binits{M.}}:
\batitle{Geographically weighted discriminant analysis}.
\bjtitle{Geographical Analysis}
\bvolume{39}(\bissue{4}),
\bfpage{376}--\blpage{396}
(\byear{2007})
\end{barticle}
\endbibitem

\bibitem[\protect\citeauthoryear{Luo et~al.}{2022}]{luo2022socioeconomic}
\begin{botherref}
\oauthor{\bsnm{Luo}, \binits{Y.}},
\oauthor{\bsnm{Yan}, \binits{J.}},
\oauthor{\bsnm{McClure}, \binits{S.C.}},
\oauthor{\bsnm{Li}, \binits{F.}}:
Socioeconomic and environmental factors of poverty in china using geographically weighted random forest regression model.
Environmental Science and Pollution Research,
1--13
(2022)
\end{botherref}
\endbibitem

\bibitem[\protect\citeauthoryear{Santos et~al.}{2019}]{santos2019geographically}
\begin{barticle}
\bauthor{\bsnm{Santos}, \binits{F.}},
\bauthor{\bsnm{Graw}, \binits{V.}},
\bauthor{\bsnm{Bonilla}, \binits{S.}}:
\batitle{A geographically weighted random forest approach for evaluate forest change drivers in the northern ecuadorian amazon}.
\bjtitle{PloS one}
\bvolume{14}(\bissue{12}),
\bfpage{0226224}
(\byear{2019})
\end{barticle}
\endbibitem

\bibitem[\protect\citeauthoryear{Khan et~al.}{2022}]{khan2022geographically}
\begin{barticle}
\bauthor{\bsnm{Khan}, \binits{S.N.}},
\bauthor{\bsnm{Li}, \binits{D.}},
\bauthor{\bsnm{Maimaitijiang}, \binits{M.}}:
\batitle{A geographically weighted random forest approach to predict corn yield in the us corn belt}.
\bjtitle{Remote Sensing}
\bvolume{14}(\bissue{12}),
\bfpage{2843}
(\byear{2022})
\end{barticle}
\endbibitem

\bibitem[\protect\citeauthoryear{Wu et~al.}{2024}]{wu2024geographically}
\begin{barticle}
\bauthor{\bsnm{Wu}, \binits{D.}},
\bauthor{\bsnm{Zhang}, \binits{Y.}},
\bauthor{\bsnm{Xiang}, \binits{Q.}}:
\batitle{Geographically weighted random forests for macro-level crash frequency prediction}.
\bjtitle{Accident Analysis \& Prevention}
\bvolume{194},
\bfpage{107370}
(\byear{2024})
\end{barticle}
\endbibitem

\bibitem[\protect\citeauthoryear{Georganos and Kalogirou}{2022}]{georganos2022forest}
\begin{barticle}
\bauthor{\bsnm{Georganos}, \binits{S.}},
\bauthor{\bsnm{Kalogirou}, \binits{S.}}:
\batitle{A forest of forests: a spatially weighted and computationally efficient formulation of geographical random forests}.
\bjtitle{ISPRS International Journal of Geo-Information}
\bvolume{11}(\bissue{9}),
\bfpage{471}
(\byear{2022})
\end{barticle}
\endbibitem

\bibitem[\protect\citeauthoryear{Andris et~al.}{2013}]{andris2013support}
\begin{barticle}
\bauthor{\bsnm{Andris}, \binits{C.}},
\bauthor{\bsnm{Cowen}, \binits{D.}},
\bauthor{\bsnm{Wittenbach}, \binits{J.}}:
\batitle{Support vector machine for spatial variation}.
\bjtitle{Transactions in GIS}
\bvolume{17}(\bissue{1}),
\bfpage{41}--\blpage{61}
(\byear{2013})
\end{barticle}
\endbibitem

\bibitem[\protect\citeauthoryear{Yang et~al.}{2023}]{yang2023geographically}
\begin{barticle}
\bauthor{\bsnm{Yang}, \binits{W.}},
\bauthor{\bsnm{Deng}, \binits{M.}},
\bauthor{\bsnm{Tang}, \binits{J.}},
\bauthor{\bsnm{Luo}, \binits{L.}}:
\batitle{Geographically weighted regression with the integration of machine learning for spatial prediction}.
\bjtitle{Journal of Geographical Systems}
\bvolume{25}(\bissue{2}),
\bfpage{213}--\blpage{236}
(\byear{2023})
\end{barticle}
\endbibitem

\bibitem[\protect\citeauthoryear{Hagenauer and Helbich}{2022}]{hagenauer2022geographically}
\begin{barticle}
\bauthor{\bsnm{Hagenauer}, \binits{J.}},
\bauthor{\bsnm{Helbich}, \binits{M.}}:
\batitle{A geographically weighted artificial neural network}.
\bjtitle{International Journal of Geographical Information Science}
\bvolume{36}(\bissue{2}),
\bfpage{215}--\blpage{235}
(\byear{2022})
\end{barticle}
\endbibitem

\bibitem[\protect\citeauthoryear{Nisha and Muthurajkumar}{2023}]{nisha2023semantic}
\begin{botherref}
\oauthor{\bsnm{Nisha}, \binits{S.R.}},
\oauthor{\bsnm{Muthurajkumar}, \binits{S.}}:
Semantic graph based convolutional neural network for spam e-mail classification in cybercrime applications.
International Journal of Computers Communications \& Control
\textbf{18}(1)
(2023)
\end{botherref}
\endbibitem

\bibitem[\protect\citeauthoryear{Navarro~Cerd{\'a}n et~al.}{2023}]{navarro2023deep}
\begin{botherref}
\oauthor{\bsnm{Navarro~Cerd{\'a}n}, \binits{J.R.}},
\oauthor{\bsnm{Mill{\'a}n~Escriv{\'a}}, \binits{D.}},
\oauthor{\bsnm{Larroza}, \binits{A.}},
\oauthor{\bsnm{Pons-Su{\~n}er}, \binits{P.}},
\oauthor{\bsnm{P{\'e}rez~Cort{\'e}s}, \binits{J.C.}}:
A deep gcn approach based on multidimensional projections and classification to cybercrime detection in a true imbalanced problem with semisupervision.
Available at SSRN 4519572
(2023)
\end{botherref}
\endbibitem

\bibitem[\protect\citeauthoryear{Guo et~al.}{2021}]{guo2021deep}
\begin{barticle}
\bauthor{\bsnm{Guo}, \binits{Z.}},
\bauthor{\bsnm{Tang}, \binits{L.}},
\bauthor{\bsnm{Guo}, \binits{T.}},
\bauthor{\bsnm{Yu}, \binits{K.}},
\bauthor{\bsnm{Alazab}, \binits{M.}},
\bauthor{\bsnm{Shalaginov}, \binits{A.}}:
\batitle{Deep graph neural network-based spammer detection under the perspective of heterogeneous cyberspace}.
\bjtitle{Future generation computer systems}
\bvolume{117},
\bfpage{205}--\blpage{218}
(\byear{2021})
\end{barticle}
\endbibitem

\bibitem[\protect\citeauthoryear{Kurshan et~al.}{2020}]{kurshan2020financial}
\begin{bchapter}
\bauthor{\bsnm{Kurshan}, \binits{E.}},
\bauthor{\bsnm{Shen}, \binits{H.}},
\bauthor{\bsnm{Yu}, \binits{H.}}:
\bctitle{Financial crime \& fraud detection using graph computing: Application considerations \& outlook}.
In: \bbtitle{2020 Second International Conference on Transdisciplinary AI (TransAI)},
pp. \bfpage{125}--\blpage{130}
(\byear{2020}).
\bcomment{IEEE}
\end{bchapter}
\endbibitem

\bibitem[\protect\citeauthoryear{Bilot et~al.}{2023}]{bilot2023graph}
\begin{barticle}
\bauthor{\bsnm{Bilot}, \binits{T.}},
\bauthor{\bsnm{El~Madhoun}, \binits{N.}},
\bauthor{\bsnm{Al~Agha}, \binits{K.}},
\bauthor{\bsnm{Zouaoui}, \binits{A.}}:
\batitle{Graph neural networks for intrusion detection: A survey}.
\bjtitle{IEEE Access}
\bvolume{11},
\bfpage{49114}--\blpage{49139}
(\byear{2023})
\end{barticle}
\endbibitem

\bibitem[\protect\citeauthoryear{Nicholls et~al.}{2021}]{nicholls2021financial}
\begin{barticle}
\bauthor{\bsnm{Nicholls}, \binits{J.}},
\bauthor{\bsnm{Kuppa}, \binits{A.}},
\bauthor{\bsnm{Le-Khac}, \binits{N.-A.}}:
\batitle{Financial cybercrime: A comprehensive survey of deep learning approaches to tackle the evolving financial crime landscape}.
\bjtitle{Ieee Access}
\bvolume{9},
\bfpage{163965}--\blpage{163986}
(\byear{2021})
\end{barticle}
\endbibitem

\bibitem[\protect\citeauthoryear{Tekin and Kozat}{2023}]{tekin2023crime}
\begin{barticle}
\bauthor{\bsnm{Tekin}, \binits{S.F.}},
\bauthor{\bsnm{Kozat}, \binits{S.S.}}:
\batitle{Crime prediction with graph neural networks and multivariate normal distributions}.
\bjtitle{Signal, Image and Video Processing}
\bvolume{17}(\bissue{4}),
\bfpage{1053}--\blpage{1059}
(\byear{2023})
\end{barticle}
\endbibitem

\bibitem[\protect\citeauthoryear{Yang}{2023}]{yang2023crimegnn}
\begin{botherref}
\oauthor{\bsnm{Yang}, \binits{C.}}:
Crimegnn: Harnessing the power of graph neural networks for community detection in criminal networks.
arXiv preprint arXiv:2311.17479
(2023)
\end{botherref}
\endbibitem

\bibitem[\protect\citeauthoryear{Zhou et~al.}{2024}]{zhou2024hdm}
\begin{botherref}
\oauthor{\bsnm{Zhou}, \binits{B.}},
\oauthor{\bsnm{Zhou}, \binits{H.}},
\oauthor{\bsnm{Wang}, \binits{W.}},
\oauthor{\bsnm{Chen}, \binits{L.}},
\oauthor{\bsnm{Ma}, \binits{J.}},
\oauthor{\bsnm{Zheng}, \binits{Z.}}:
Hdm-gnn: A heterogeneous dynamic multi-view graph neural network for crime prediction.
ACM Transactions on Sensor Networks
(2024)
\end{botherref}
\endbibitem

\bibitem[\protect\citeauthoryear{Hou et~al.}{2022}]{hou2022integrated}
\begin{barticle}
\bauthor{\bsnm{Hou}, \binits{M.}},
\bauthor{\bsnm{Hu}, \binits{X.}},
\bauthor{\bsnm{Cai}, \binits{J.}},
\bauthor{\bsnm{Han}, \binits{X.}},
\bauthor{\bsnm{Yuan}, \binits{S.}}:
\batitle{An integrated graph model for spatial--temporal urban crime prediction based on attention mechanism}.
\bjtitle{ISPRS International Journal of Geo-Information}
\bvolume{11}(\bissue{5}),
\bfpage{294}
(\byear{2022})
\end{barticle}
\endbibitem

\bibitem[\protect\citeauthoryear{Roshankar and Keyvanpour}{2023}]{roshankar2023spatio}
\begin{bchapter}
\bauthor{\bsnm{Roshankar}, \binits{R.}},
\bauthor{\bsnm{Keyvanpour}, \binits{M.R.}}:
\bctitle{Spatio-temporal graph neural networks for accurate crime prediction}.
In: \bbtitle{2023 13th International Conference on Computer and Knowledge Engineering (ICCKE)},
pp. \bfpage{168}--\blpage{173}
(\byear{2023}).
\bcomment{IEEE}
\end{bchapter}
\endbibitem

\bibitem[\protect\citeauthoryear{Innan et~al.}{2024}]{innan2024financial}
\begin{barticle}
\bauthor{\bsnm{Innan}, \binits{N.}},
\bauthor{\bsnm{Sawaika}, \binits{A.}},
\bauthor{\bsnm{Dhor}, \binits{A.}},
\bauthor{\bsnm{Dutta}, \binits{S.}},
\bauthor{\bsnm{Thota}, \binits{S.}},
\bauthor{\bsnm{Gokal}, \binits{H.}},
\bauthor{\bsnm{Patel}, \binits{N.}},
\bauthor{\bsnm{Khan}, \binits{M.A.-Z.}},
\bauthor{\bsnm{Theodonis}, \binits{I.}},
\bauthor{\bsnm{Bennai}, \binits{M.}}:
\batitle{Financial fraud detection using quantum graph neural networks}.
\bjtitle{Quantum Machine Intelligence}
\bvolume{6}(\bissue{1}),
\bfpage{7}
(\byear{2024})
\end{barticle}
\endbibitem

\end{thebibliography}
\end{document}